\documentclass[12pt]{alt2020} 


\usepackage{natbib}
\bibliographystyle{abbrvnat}
\setcitestyle{authoryear,open={(},close={)}}

\newcommand{\op}{\begin{itemize}}
\newcommand{\ed}{\end{itemize}}

\newcommand{\opq}{\begin{quote}}
\newcommand{\edq}{\end{quote}}

\newcommand{\ope}{\begin{enumerate}}
\newcommand{\ede}{\end{enumerate}}

\newcommand{\im}{\item}

\newcommand{\uu}{\mathbf{u}}
\newcommand{\vv}{\mathbf{v}}
\newcommand{\ww}{\mathbf{w}}
\newcommand{\xx}{\mathbf{x}}

\newcommand{\VV}{\mathbf{V}}
\newcommand{\UU}{\mathbf{U}}
\newcommand{\WW}{\mathbf{W}}

\newcommand{\Data}{\mathsf{Data}}

\newcommand{\II}{\mathcal{I}}

\newcommand{\lee}{\mathrm{left}}
\newcommand{\rii}{\mathrm{right}}

\newcommand{\Prob}{\mathbb{P}}
\newcommand{\Exp}{\mathbb{E}}

\newcommand{\VVV}{\mathcal{V}}

\newcommand{\SSS}{\mathcal{S}}
\newcommand{\DDD}{\mathcal{D}}

\newcommand{\HH}{\mathcal{H}}

\newcommand{\XX}{\mathbf{X}}

\newcommand{\PPP}{\mbox{$ \left( \VVV, \SSS, \HH \right) $}}

\newcommand{\given}{\mid}

\newcommand{\indep}{\:\raisebox{0.05em}{\rotatebox[origin=c]{90}{$\models$}}\:}

\newtheorem*{observation*}{Observation}

\title[Causal Learning Without Faithfulness]{On Learning Causal Structures from Non-Experimental Data \\without Any Faithfulness Assumption}
\usepackage{times}




\altauthor{%
 \Name{Hanti Lin} \Email{ika@ucdavis.edu}\\
 \addr Philosophy Department, University of California, Davis
 \AND
 \Name{Jiji Zhang} \Email{jijizhang@ln.edu.hk}\\
 \addr Philosophy Department, Lingnan University
}

\begin{document}

\maketitle

\begin{abstract}
Consider the problem of learning, from non-experimental data, the causal (Markov equivalence) structure of the true, unknown causal Bayesian network (CBN) on a given, fixed set of (categorical) variables. This learning problem is known to be very hard, so much so that there is no learning algorithm that converges to the truth for all possible CBNs (on the given set of variables). So the convergence property has to be sacrificed for some CBNs---but for which? In response, the standard practice has been to design and employ learning algorithms that secure the convergence property for at least all the CBNs that satisfy the famous \emph{faithfulness} condition, which implies sacrificing the convergence property for some CBNs that violate the faithfulness condition (Spirtes, Glymour, and Scheines, 2000). This standard design practice can be justified by assuming---that is, accepting on faith---that the true, unknown CBN satisfies the faithfulness condition. But the real question is this: Is it possible to explain, \emph{without assuming} the faithfulness condition or any of its weaker variants, why it is mandatory rather than optional to follow the standard design practice? This paper aims to answer the above question in the affirmative. We first define an array of modes of convergence to the truth as desiderata that might or might not be achieved by a causal learning algorithm. Those modes of convergence concern (i) how pervasive the domain of convergence is on the space of all possible CBNs and (ii) how uniformly the convergence happens. Then we prove a result to the following effect: for \emph{any} learning algorithm that tackles the causal learning problem in question, if it achieves the best achievable mode of convergence (considered in this paper), then it \emph{must} follow the standard design practice of converging to the truth for at least all CBNs that satisfy the faithfulness condition---it is a requirement, not an option.
\end{abstract}

\begin{keywords}%
	Causal Bayesian Network, Causal Discovery, Faithfulness Condition, Learning Theory, Almost Everywhere Convergence, Locally Uniform Convergence
\end{keywords}

\section{Introduction}

Suppose that there is a causal system that can be properly modeled by some causal Bayesian network (CBN) on a given set of observable variables, and that we aim to learn the causal structure of the true, unknown CBN (at least up to Markov equivalence), which is crucial to predicting what causal effects there would be if we were to manipulate this or that variable.
Suppose, further, that we wish to learn the causal structure only from non-experimental data, possibly because experimentation is too costly or unethical. It is well-known that this learning problem is very hard. The difficulty is that there can be two very different CBNs that are indistinguishable in terms of non-experimental data. To be more precise, there can be two CBNs $N$ and $N'$ with the following properties: 
	\ope  
	\im ({\sc Causal Difference}) $N$ and $N'$ have quite {\em different} causal structures that are not even Markov equivalent. 
	\im ({\sc Statistical Nonidentifiability}) $N$ and $N'$ share the {\em same} joint probability distribution; so, if a learning algorithm receives only non-experimental data (i.e., data collected without causing a change in the joint distribution), then it must, at any sample size, fail to have a high probability of identifying the true structure either for $N$ or for $N'$.
	\ede  
Because of property 1 (causal difference), it would be great if we could have a learning algorithm that converges in probability to the true causal structure (up to Markov equivalence) for both $N$ and $N'$ and, hopefully, for all CBNs on the given set of variables. But, unfortunately, no learning algorithm can be that good, by property 2 (statistical nonidentifiability). So, when we design a causal learning algorithm (also called causal discovery algorithm), the convergence property {\em must be sacrificed} for $N$ or for $N'$ and similarly for any other pair of CBNs to the same effect. Sacrifices have to be made for some---but for which? 


In reaction to that difficulty, the standard practice has been to design and employ learning algorithms that secure the convergence property for {\em at least} all the CBNs that satisfy the famous {\em faithfulness} condition \citep*{spirtes2000causation}, which implies sacrificing the convergence property for some (possibly not all) CBNs that violate the faithfulness condition. Examples abound, including constraint-based algorithms such as {\em PC} \citep{spirtes2000causation}, score-based algorithms such as {\em GES} \citep{chickering2002optimal}, and hybrids of those two kinds of algorithms \citep{zhalama2017weakening}. This standard design practice can be justified if we are willing to simply assume---that is, accept on faith---that the unknown, true CBN turns out to satisfy the faithfulness condition. But the real question is this: {\em Can we justify this standard design practice without assuming the faithfulness condition or any of its weaker variants?} This is the question that this paper aims to address---and answer in the affirmative. 

While we believe that the question just posed is very important, there is only a very small literature that attempts to address it. As far as we know, very few works try to address the issue explicitly; two notable ones are \cite{spirtes2000causation} and \cite{meek1995strong}. They show that the faithfulness condition only rules out a mathematically negligible set of CBNs (in the sense of negligibility that, roughly, a lower-dimensional plane is negligible in a higher-dimensional space). So any learning algorithm that follows the standard design practice sacrifices the convergence property only for a negligible set of CBNs. So it seems that we should not worry too much about using such a learning algorithm. 

But some question remains to be addressed. To be sure, we should not worry about using a learning method that sacrifices the convergence property for some CBNs that form a mathematically negligible set, precisely because no learning method can avoid sacrificing that much. So sacrifices have to be made at least for {\em some} mathematically negligible set of CBNs---but for {\em which} should sacrifices be made? One option is to follow the standard design practice. But there are alternatives, such as this one: (i) identify two CBNs $N$ and $N'$ that share the same joint distribution, with $N$ satisfying the faithfulness condition and $N'$ violating that condition, (ii) design and adopt a learning algorithm whose domain of convergence to the truth is the same as the set of faithful CBNs except that the faithful one $N$ is removed from the domain of convergence and the unfaithful one $N'$ is added to it. Such a learning algorithm is one of the infinitely many alternatives that sacrifice the convergence property only for a mathematically negligible set of CBNs but run counter to the standard design practice, which tries to secure the convergence property for at least {\em all} faithful CBNs. {\em But should we follow the standard design practice rather than any of those competing alternatives? If so, why?} That's the question.

To answer the above question, this paper develops a general, straightforward strategy:
	\begin{quote}
	{\sc General Strategy.} When the learning problem in question is extremely hard, so much so that (almost) every familiar desideratum for learning algorithms is provably too high an ideal to be achievable, we do not have to react by making an assumption that turns the learning problem into an easier one. Instead, we can react in this way: keep the learning problem as it is, look for what can be achieved, and determine what it takes for a learning method to achieve the highest achievable desideratum.
	\end{quote}
This strategy is implemented by defining certain modes of convergence to the truth that can be taken as desiderata for learning algorithms. To begin with, consider the question of where convergence happens. It would be great to extend the domain of convergence to cover everywhere on the space of all CBNs (on the given set of variables); but that is provably impossible for causal learning. So we examine the possibility of achieving some lower ideals (and their combinations): 
	\op 
	\im[$(a)$] extending the domain of convergence to cover {\em almost everywhere} on the space of all CBNs, i.e., everywhere except on a topologically negligible subset (a nowhere dense subset);
	
	\im[$(b)$] having a {\em maximal} domain of convergence, i.e., one that cannot be extended further.
	\ed 
This leads to the modes of convergence listed on the axis in figure \ref{fig-modes} that stretches to the upper right. The other axis, which stretches to the upper left, concerns the question of how uniformly convergence happens. 
	\begin{figure}[ht]
	\centering	\includegraphics[width=0.55\textwidth]{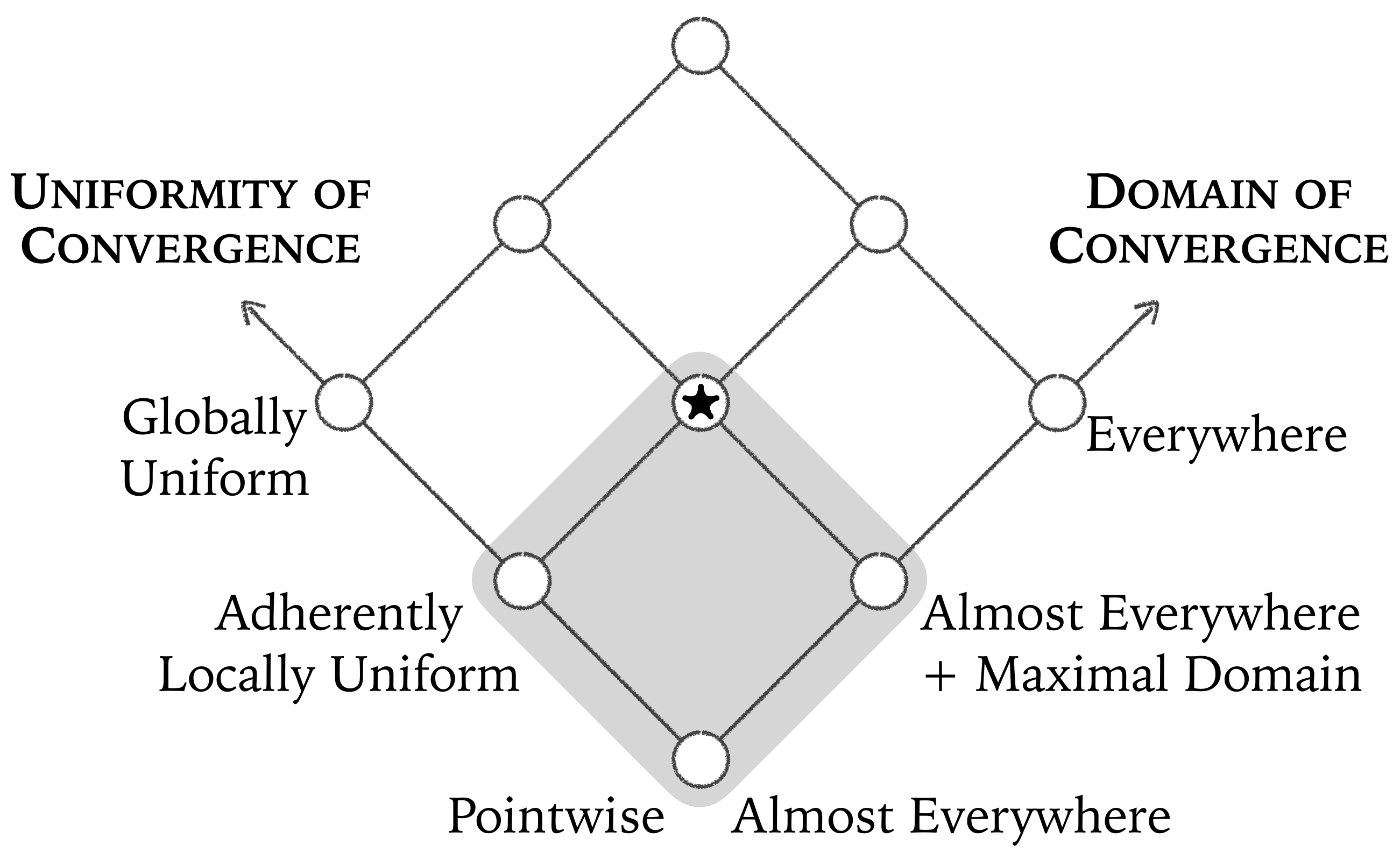}
	\caption{Modes of convergence to the truth}
	\label{fig-modes}
	\end{figure}
It would be great to achieve globally uniform convergence (aka uniform consistency), but that is provably impossible for causal learning. That is, no causal learning algorithm can guarantee a bound on the error probability that applies to all CBNs. So we examine the possibility of achieving something local rather than global:  
    \op 
    \im[$(c)$] {\em locally uniform} convergence of a certain kind---the ``adherent'' kind---that guarantees that a low error probability can be obtained stably under small perturbations of the joint probability distribution but without a change in the causal structure (that is, with ``adherence'' to the unchanged causal structure).
    \ed 
Those two considerations---about domain of convergence and uniformity of convergence---are then used to define nine joint modes of convergence, depicted as the nine nodes in the lattice in figure \ref{fig-modes}. (Some of those joint modes are equivalent because globally uniform convergence, alone, is strong enough to imply anything else in the figure.) The main result of this paper is theorem \ref{the:general1} (stated in section \ref{sec:result1}), which can be summarized as follows:
	\op
	\im Of the joint modes of convergence in figure \ref{fig-modes}, the achievable ones for the problem of learning the true causal structure (up Markov equivalence) are exactly those in the shaded area. So the best achievable one is the one marked by a star $\star$ in the figure, which conjoins the three modes of convergence $(a)$ ``almost everywhere'', $(b)$ ``maximal domain'', and $(c)$ ``adherently locally uniform''. 
	\im For {\em any} causal learning algorithm $L$, if $L$ satisfies at least $(a)$, $(b)$, and $(c)$ simultaneously (whether or not it satisfies any additional desiderata about, say, rates of convergence or computational complexity), then $L$ {\em must} follow the standard design practice in that it converges to the truth for at least all CBNs that satisfy the faithfulness condition---this is a requirement, not an option.
	\ed 


To the best of our knowledge, this is the first theoretical result that explains, without assuming the faithfulness condition or any of its weaker variants, why it is mandatory rather than optional to follow the standard design practice. This result is proved for any fixed finite set of categorical variables, under just the standard assumption of IID (identically and independently distributed observations) and the assumptions built into the definition of causal Bayesian networks.

This paper actually does more. To achieve at least the desideratum of $(a)$+$(b)$+$(c)$, convergence to the truth {\em must be secured} for some range of CBNs, {\em must be sacrificed} for some other range, and is {\em optional} for the remaining range. Those three ranges are precisely determined in a strengthening of the main result, theorem \ref{the:general2} (stated in section \ref{sec:result2}). 

The rest of this paper proceeds as follows. Standard definitions are reviewed in sections~\ref{sec:setup1} with examples. Key definitions are provided and motivated in section~\ref{sec:setup2}. The main result is stated and discussed in section \ref{sec:result1}, followed by a strengthened result in section \ref{sec:result2}. Section \ref{sec:proof-crucial-lemma} provides an illustrated proof of a quite revealing lemma. Complete proofs are left to the appendix. To declare the style in use: Emphasis is indicated by {\em italics}; the terms to be defined are presented in \textbf{boldface}.

\section{Review of Standard Definitions}\label{sec:setup1}


Fix a finite set of variables, $\VVV = \{X_1, X_2, \ldots, X_K\}$. A possible \textbf{causal structure} over those variables is represented by a directed acyclic graph on $\VVV$, written $G = (\VVV, \to)$, where the binary relation $X_i \to X_j$ is understood to say that $X_i$ is an immediate cause of $X_j$ relative to $\VVV$, or in short, that $X_i$ is a \textbf{parent} of $X_j$. If a variable $X_i$ is a parent of (a parent of a parent of ...) a variable $X_j$, say that $X_j$ is a \textbf{descendant} of $X_i$. For convenience, we count every variable as its own descendant. We will refer to $G$ simply as a \textbf{(causal) graph}, dropping `directed acyclic', because only directed acyclic graphs are considered in this paper. If a graph $G$ and a joint distribution $P$ are so connected that each variable in $G$ is $P$-independent of its non-descendants given all of its parents (with respect to graph $G$), say that graph $G$ and distribution $P$ satisfy the \textbf{Markov condition}, that $G$ is \textbf{Markov} to $P$, and that $(G, P)$ is a \textbf{causal Bayesian network} (CBN). The Markov condition, as the defining condition of CBNs, is often taken for granted as a necessary connection between causal graphs and joint distributions---between causation and probability.

The Markov condition can be conveniently expressed in another way. Let $\VVV_1, \VVV_2$, and $\VVV_3$ be disjoint subsets of the given, fixed set $\VVV$ of variables. Understand $\VVV_1 \indep \VVV_2 \given \VVV_3$ as the statement saying that $\VVV_1$ and $\VVV_2$ are independent given $\VVV_3$. A graph $G$ is said to \textbf{entail} a conditional independence statement $\VVV_1 \indep \VVV_2 \given \VVV_3$ if that statement holds with respect to every joint distribution to which $G$ is Markov. Let $\II(G)$ denote the set of the conditional independence statements that $G$ entails. Let $\II(P)$ denote the set of the conditional independence statements that hold with respect to $P$. Then it is well-known that $G$ and $P$ satisfy the Markov condition if and only if
$$\II(G) \,\subseteq\, \II(P) \,.$$ 
Now, consider the following stronger condition: 
$$\II(G) \,=\, \II(P) \,.$$
This condition says that the conditional independence statements entailed by $G$ are exactly those that hold with respect to $P$; in that case, say that $G$ is \textbf{faithful} to $P$, and that the CBN $(G, P)$ satisfies the \textbf{faithfulness condition}. With respect to an \textbf{unfaithful} CBN $(G, P)$, at least one conditional independence statement $\sigma$ turns out to hold (i.e., $\sigma \in \II(P)$) even though it is not required to hold by the Markov condition (i.e., $\sigma \notin \II(G)$). 

	\begin{figure}[ht]
	\centering	\includegraphics[width=0.95\textwidth]{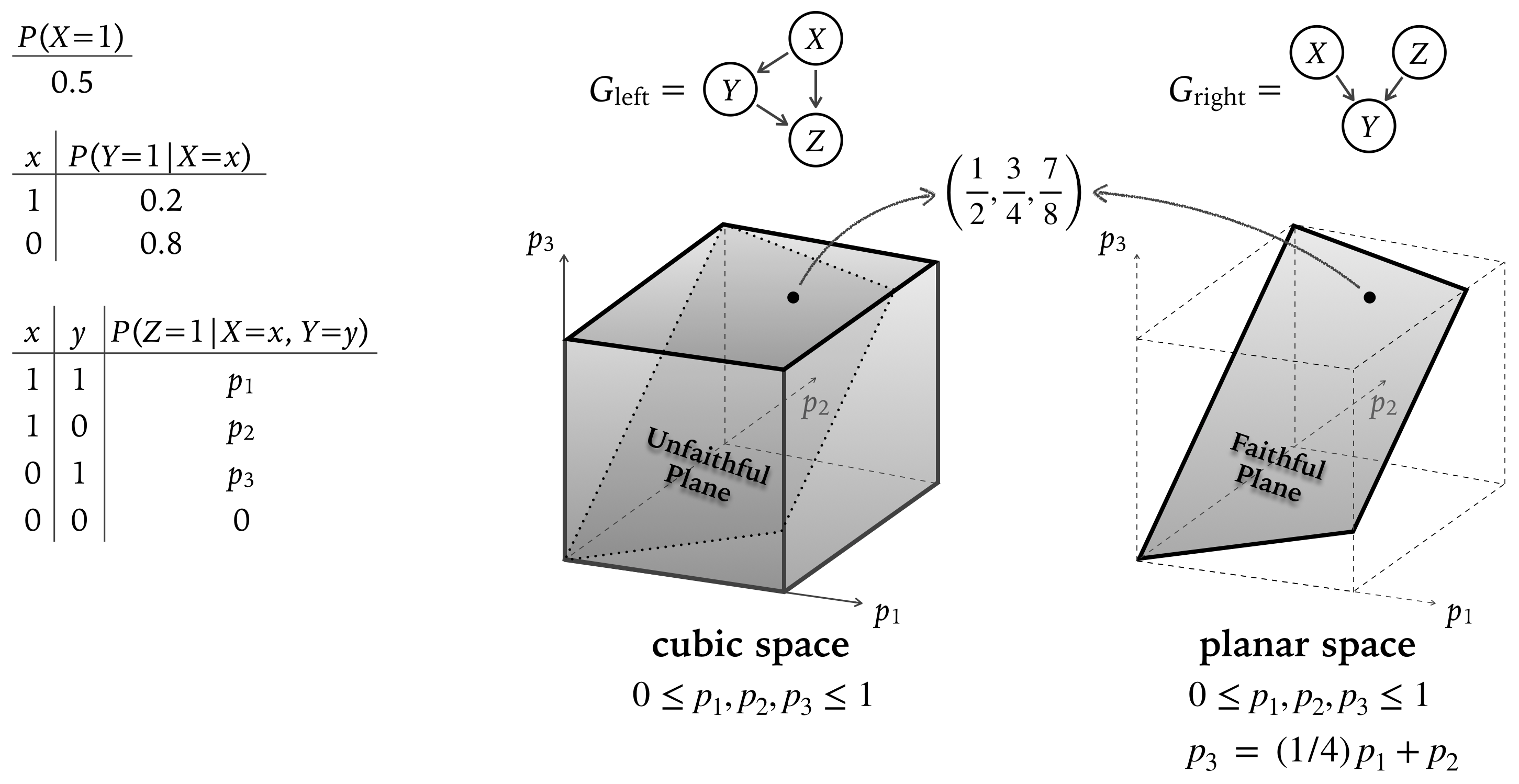}
	\caption{Two spaces of causal Bayesian networks}
	\label{fig-two-spaces}
	\end{figure}

To illustrate, let $\VVV$ contain only three binary variables $X$, $Y$, and $Z$. Consider the two causal graphs depicted in figure \ref{fig-two-spaces}, $G_{\mathrm{left}}$ and $G_{\mathrm{right}}$. Instead of thinking about all joint distributions on $\VVV$, for the sake of visualization let's consider just the joint distributions that are defined by the three tables on the left of the same figure, with three parameters $p_1, p_2$, and $p_3$ taking values in the unit interval. So those parameterized distributions form a unit cube. The design of this parameterized family ensures that the left causal graph $G_\lee$ is Markov to each of those distributions. So the points in the left cube in the same figure represent all the CBNs $(G_\lee, P)$ with $P$ in the parameterized family.

The right graph $G_\rii$, on the other hand, turns out to be Markov {\em only} to the distributions $(p_1, p_2, p_3)$ under the constraint $p_3 = (1/4) p_1 + p_2$, which defines the trapezoidal plane on the right in figure \ref{fig-two-spaces}. Here is why. The left graph $G_\lee$ entails only the conditional independence statements that are trivially true (i.e., true simply in virtue of the probability calculus). But the right graph $G_\rii$ entails one more conditional independence statement: $\{X\} \indep \{Z\} \given \varnothing$, which says that $X$ and $Z$ are independent. To satisfy this additional independence relation is provably to satisfy the equation $p_3 = (1/4) p_1 + p_2$ that defines the trapezoidal plane on the right. 

The cubic space of CBNs on the left embeds a copy of the trapezoidal plane, as depicted in the same figure. So every CBN $(G_\lee, P)$ on the left, embedded trapezoid shares the same joint distribution $P$ with its counterpart CBN $(G_\rii, P)$ on the right. Due to this sharing of the same joint distribution, every CBN $(G_\lee, P)$ on the left, embedded trapezoid is unfaithful because it satisfies the independence between $X$ and $Z$, which goes beyond the conditional independence statements entailed by the left causal graph $G_\lee$. So the left, embedded trapezoid contains only unfaithful CBNs; accordingly, call it the {\bf unfaithful plane}. The sharing of the same joint distribution is an important source of the difficulty of causal learning, as we will see below.

A {\bf causal learning problem} is represented by a triple $(\VVV, \SSS, \HH)$ whose three components are understood as follows: 
	\ope  
	\im {\sc (Variable)} A certain causal system is under study and assumed or known to be accurately represented by a CBN on a given set $\VVV$ of variables. 
	\im {\sc (State)} The true, unknown CBN is assumed, and only assumed, to be in a given set $\SSS$ of CBNs over those variables. Each element of $\SSS$ is a CBN $(G, P)$ understood as a possible state of the causal world, called a {\bf causal state}, in which the true joint probability distribution is $P$ and the true causal structure is $G$. So $\SSS$ is the space of the possible causal states under consideration. 
	\im {\sc (Hypothesis)} The goal is to learn, from non-experimental data, the truth among certain competing hypotheses that form a given set $\HH$---hypotheses about the causal structure of the true, unknown CBNs. In every causal state in $\SSS$, exactly one causal hypothesis in $\HH$ is true. 
	\ede 

Given a causal learning problem $(\VVV, \SSS, \HH)$, a learning method for tackling that problem is, roughly, a function that maps each possible ``data set'' to a hypothesis in $\HH$; such a learning method might perform ``well'' in one causal state but ``poorly'' in another, and will be evaluated in terms of its (varying) performances in all the causal states in the given state space $\SSS$. Those rough ideas can be made precise as follows.

To define data sets, assume for simplicity that we can observe the value of every variable in $\VVV$. Fix an enumeration of the variables $X_1, X_2, \ldots, X_K$ in $\VVV$. Let $\XX = (X_1, X_2, \ldots, X_K)^\mathrm{T}$ be the column vector of those variables, in the order of the given enumeration. The observation of the $i$-th instance of the causal system under study will be represented by a random vector $\XX_i = (X_{1,i}, X_{2,i}, \ldots, X_{K,i})^\mathrm{T}$, where $X_{k,i}$ represents the observation of the $k$-th variable $X_k$ in the $i$-th instance of the causal system. Assume that observations $\XX_1, \XX_2, \ldots, \XX_i, \ldots$ of different instances are independent and identically distributed (IID). So, if $(G, P)$ is the true CBN, then $\XX_i \sim \XX \sim P$ for each $i \ge 1$. A \textbf{data set} of sample size $n$ (over the set $\VVV$ of variables), written $(\xx_1, \ldots, \xx_n)$, is a realization of the $n$ observations $(\XX_1, \ldots, \XX_n)$. The collection of all such data sets, written $\Data(\VVV)$, contains the possible inputs considered in this paper.

A \textbf{learning method} for tackling a causal learning problem $(\VVV, \SSS, \HH)$ is formally a function from $\Data(\VVV)$ to $\HH$---a function $\hat{H}$ that maps any data set $(\xx_1, \ldots, \xx_n)$ of any sample size $n$ (over the set $\VVV$ of variables) to a hypothesis in $\HH$, denoted by $\hat{H}(\xx_1, \ldots, \xx_n)$. To clarify, $\hat{H}(\xx_1, \ldots, \xx_n)$ is a {\em specific} causal hypothesis, one that the learning method $\hat{H}$ outputs/accepts when it receives a concrete data set $(\xx_1, \ldots, \xx_n)$. In contrast, $\hat{H}(\XX_1, \ldots, \XX_n)$ is a {\em random} causal hypothesis---a random variable denoting the causal hypothesis that $\hat{H}$ outputs given a random observation $(\XX_1, \ldots, \XX_n)$ of sample size $n$. While a learning method is construed abstractly as a mere input-output relation, a \textbf{learning algorithm} is understood as a concrete instruction that (efficiently or inefficiently) implements some learning method. This paper focuses on the abstract level of learning methods rather than the concrete level of learning algorithms.

In a causal state $s = (G, P)$, a learning method $\hat{H}$ might perform well or poorly given a sample size $n$, and the performance is captured by the following probabilities:
	\begin{eqnarray*}
	\textbf{success probability} &=& \mathbb{P}_s \left( \hat{H}(\XX_1, \ldots, \XX_n) = H_s\right) ,
	\\
	\textbf{error probability} &=& \mathbb{P}_s \left( \hat{H}(\XX_1, \ldots, \XX_n) \not= H_s\right) ,
	\end{eqnarray*}
where $\Prob_s$ denotes the sampling distribution true in causal state $s = (G, P)$, namely the $\infty$-fold probability measure generated by $P$ under the IID assumption, and $H_s$ denotes the causal hypothesis in $\HH$ that is true in causal state $s = (G, P)$.

\section{Key Definitions: Modes of Convergence}\label{sec:setup2}

The crux of the matter can be understood this way: When the learning problem $\PPP$ in question makes too weak a background assumption---that is, when the state space $\SSS$ under consideration is too inclusive---then (almost) all familiar desiderata for learning methods become too high an ideal to be achievable. For example, consider the familiar desideratum of {\em statistical consistency}, which can be defined in the present setting as follows. A learning method $\hat{H}$ for tackling a causal learning problem $\PPP$ is said to \textbf{converge (in probability) to the truth} in a causal state $s = (G, P)$ if, in state $s$, the success probability of $\hat{H}$ approaches $1$ as the sample size $n$ increases indefinitely, or in symbol:
	$$\mathbb{P}_s \left( \hat{H}(\XX_1, \ldots, \XX_n) = H_s\right) \to 1  \quad\mathrm{as}\quad n \to \infty \,.$$
If learning method $\hat{H}$ converges to the truth in every state in the given state space $\SSS$, say that it is {\bf statistically consistent}, or more intuitively, say that it converges to the truth {\bf everywhere} with respect to the given learning problem $\PPP$. Statistical consistency alone may not be enough for making a good learning method, but it is usually taken as one of the {\em minimal qualifications} for making a good learning method if this qualification can possibly be met. Indeed, there is a familiar, higher evaluation standard: A learning method $\hat{H}$ for tackling a causal learning problem $\PPP$ is said to converge to the truth with {\bf global uniformity} (aka {\bf uniform consistency}) if  
	$$
	\inf_{s \in \SSS} \mathbb{P}_{s} \! \left( \hat{H}(\XX_1, \ldots, \XX_n) = H_{s}\right) \to 1  \quad\mathrm{as}\quad n \to \infty \,.
	$$

When the state space $\SSS$ is too inclusive, everywhere convergence can be easily unachievable, let alone the stronger condition of globally uniform convergence. This can already be seen from the example illustrated in figure \ref{fig-two-spaces}. To be more specific, consider the joint distribution $P^*$ parameterized by $\left(p_1, p_2, p_3\right) = \left(\frac{1}{2}, \frac{3}{4}, \frac{7}{8}\right)$, as indicated in that figure. Suppose that the state space $\SSS$ contains at least the unfaithful causal state $s = (G_\lee, P^*)$ on the left and its counterpart $s' = (G_\rii, P^*)$ on the right; they share the same joint distribution $P^*$. Same joint distribution, same sampling distribution; so $\mathbb{P}_{s} = \mathbb{P}_{s'}$. Now, suppose that some hypothesis in $\HH$ is true in one of those two states but false in the other. Then any learning method for tackling the present problem $\PPP$ has to fail to converge to the truth in one of those two causal states. That is, the problem $\PPP$ just described is too hard to allow the possibility of everywhere convergence, let alone the stronger mode of convergence, globally uniform convergence.

When high standards are unachievable, it is natural to look for lower standards and see whether they are achievable. So, define some weaker modes of convergence as follows. 

The domain of convergence should be extended as far as possible. Accordingly, a learning method $\hat{H}$ for tackling a problem $\PPP$ is said to converge  (in probability) to the truth \textbf{on a maximal domain} if no other learning method converges (in probability) to the truth in at least all causal states in $\SSS$ where $\hat{H}$ does and in strictly more causal states in $\SSS$. 

The domain of convergence should, if possible, be extended to cover at least ``almost everywhere'', which will be defined quite standardly as in geometry and topology: ``almost everywhere'' as ``everywhere'' except on a ``nowhere dense'' subspace. Consider a very standard metric for measuring the distance between probability measures, the \textbf{total variation distance}, which is defined by: $\Delta(P, P') = \sup_{A} \left| P(A) - P'(A) \right|$, for any probability measures $P$ and $P'$. Choose a metric $\delta$ defined on the set of all causal graphs over $\VVV$ (any metric would do), hold $\delta$ fixed, and let $\delta(G, G')$ measure the distance between two causal graphs $G$ and $G'$. The distance between two causal states $s = (G, P)$ and $s' = (G', P')$ will be measured by a certain metric $d$, defined by $d(s, s') = \delta(G, G') + \Delta(P, P')$, i.e., the sum of the distance between the two causal structures and the distance between the two joint distributions. An {\bf open ball} centered at a causal state $s$ is a set taking this form:
	\begin{eqnarray*}
	B_\epsilon(s)
		&=& \left\{ s' \in \SSS: d(s, s') < \epsilon \right\} \,,
	\end{eqnarray*}
where the radius $\epsilon$ is required to be positive. This turns the state space $\SSS$ into a topological space, whose {\bf open sets} are defined as unions of open balls. The specific distance functions used to define the open sets are inessential to this paper; it is the open sets that are essential.\footnote{We are indebted to Kevin T. Kelly for suggesting to us this topology, which significantly improves on the topology used in an earlier draft of this paper.} With respect to a topological space held fixed, a subset $X$ is said to be \textbf{(topologically) negligible}, aka \textbf{nowhere dense}, if for every open set/ball $B$, there is some open set/ball $B'$ that is nested within $B$ and disjoint from $X$. In that case, $X$ has an open ``hole'' $B'$ in every local neighborhood $B$ in the topological space; it is like a slice of Swiss cheese incredibly full of open holes. A learning method $\hat{H}$ for tackling a problem $\PPP$ is said to converge (in probability) to the truth \textbf{almost everywhere} if $\hat{H}$ converges  (in probability) to the truth in all causal states in $\SSS$ except on a nowhere dense subset of $\SSS$. 

Now we turn to uniformity of convergence. A learning method $\hat{H}$ for tackling a problem $\PPP$ is said to converge (in probability) to the truth with \textbf{adherent local uniformity} if, for any causal state $s \in \SSS$, if $\hat{H}$ converges (in probability)  to the truth in $s$, then $\hat{H}$ converges (in probability) to the truth uniformly on some open neighborhood $B_\epsilon(s)$ of $s$ in the state space $\SSS$, or in symbol, there exists a radius $\epsilon > 0$ such that
	$$
	\inf_{s' \in B_\epsilon(s)} \mathbb{P}_{s'} \! \left( \hat{H}(\XX_1, \ldots, \XX_n) = H_{s'}\right) \to 1  \quad\mathrm{as}\quad n \to \infty \,.
	$$
This means that, in every causal state in which the learning method converges to the truth, the error probability can be made not just low but {\em stably} low: the performance of such a learning method remains good {\em even when} the true, unknown CBN is vulnerable to a sufficiently small perturbation over which we do not have control. As we will see below from lemma \ref{lem:smallball}, when a perturbation is sufficiently small, it will be a perturbation that changes the joint distribution only slightly but without a change in the causal structure---that is, with {\em adherence} to the unchanged causal structure.


\section{Main Result \& Discussion} \label{sec:result1}

Our main result addresses the question of where the domain of convergence should be extended if it cannot be extended to cover everywhere. Accordingly, when a learning method converges to the truth in state $s$, say that it has the convergence property be {\bf secured} in $s$; otherwise say that it has the convergence property be {\bf sacrificed} in $s$. Following \cite{spirtes2000causation}, we will focus on the task of learning a specific kind of causal hypothesis, called Markov equivalence hypothesis, which can be defined as follows. Two graphs $G$ and $G'$ are said to be \textbf{Markov equivalent} if $\II(G) = \II(G')$---that is, if $G$ and $G'$ are graphically so similar that they entail exactly the same conditional independence statements. For example, the two graphs $G_\lee$ and $G_\rii$ depicted in figure \ref{fig-two-spaces} are {\em not} Markov equivalent, for the right one entails the independence between $X$ and $Z$, which is not entailed by the left one. Each graph $G$ generates a \textbf{Markov equivalence class} $[G]$, defined as the set of all the graphs that are Markov equivalent to $G$. Each such class $[G]$ generates a {\bf Markov equivalence hypothesis} $H_G$: ``The causal graph of the true CBN is in Markov equivalence class $[G]$.'' The rationale for focusing on this kind of hypothesis will be explained below (in the discussion that follows the statement of the main result).

\begin{theorem} 
\label{the:general1}
Let $\PPP$ be any causal learning problem such that $\VVV$ is a finite set of categorical variables, $\SSS$ is the state space consisting of all causal states on $\VVV$ (i.e., all causal Bayesian networks on $\VVV$), and $\HH$ is the hypothesis set consisting of all the Markov equivalence hypotheses about $\VVV$. Suppose that there are at least two variables in $\VVV$. Then we have:
	\op
	\im[1.] Learning problem $\PPP$ is so hard that it admits of no learning method that achieves the standard of everywhere convergence to the truth  (as Spirtes et al. 2000 have already shown), let alone the higher standard of global uniform convergence.
		
	\im[2.] But learning problem $\PPP$ is at least easy enough to admit of a learning method that achieves this lower standard: convergence to the truth $(a)$ almost everywhere, $(b)$ on a maximal domain, and $(c)$ with adherent local uniformity. 

	\im[3.] For any learning method tackling problem $\PPP$, if it achieves at least that much, namely the joint mode of convergence $(a)$+$(b)$+$(c)$, then it has the convergence property be
		\op
		\im[3.1]  secured in (at least) every faithful causal state in $\SSS$, 
		\im[3.2] sacrificed in (at least) every unfaithful causal state in $\SSS$ that shares its joint probability distribution with some faithful causal state. 
		\ed 
    \ed
\end{theorem}
The impossibility result in clause 1 is familiar, as mentioned above.\footnote{This is the only clause whose truth depends on the assumption that there are at least two variables in $\VVV$.} The possibility result in clause 2 is proved by a (long) sequence of constructions and verifications, detailed in appendix \ref{sec:proof-of-existence}. Clause 3 follows immediately from theorem \ref{the:general2}, to be presented in the next section and proved below (in section \ref{sec:proof-crucial-lemma} and appendix \ref{sec:proof-of-theorem}). It is also possible to draw some pictures to illustrate why clause 3 holds in certain special cases; see appendix \ref{sec:clause3} for details. 

The three clauses of this theorem are best understood with the help of figure \ref{fig-modes}: every joint mode of convergence outside the shaded area is unachievable, thanks to clause 1; every one inside the shaded area is achievable, thanks to clause 2. So the first two clauses determine the highest achievable mode of convergence considered in this paper---the one marked with a star in figure \ref{fig-modes}. Then, clause 3 says what it takes to achieve the highest achievable one. The rest of this section takes a closer look at the three clauses of this theorem in turn. 

Clause 1 reports the impossibility of securing the convergence property everywhere, so sacrifices have to be made {\em somewhere}. The question is: {\em Where should sacrifices be made?} The crux of the matter is that the learning problem in question is very hard, so much so that (almost) all familiar standards for evaluating learning methods are too high an ideal to be achievable. In that case, there appears to be no achievable evaluation standard on the table, and hence no constraint on the candidate pool of good learning methods---in the present case, there appears to be no constraint on where the convergence property should be sacrificed. This is the source of the problem. In response, this paper pursues a general strategy: When confronted with a learning problem that is too hard to make it possible to achieve any of the familiar, higher evaluation standard (clause 1 of theorem \ref{the:general1}), we should first proceed by looking for a lower evaluation standard that is desirable and achievable (clause 2 of theorem \ref{the:general1}); if we manage to find one, we should then determine what it takes for a learning method to achieve that lower standard (clause 3 of theorem \ref{the:general1}). 

So, underlying clause 2 is the task of seeking a lower, achievable standard for the evaluation of causal learning methods. When everywhere convergence is impossible, it is still desirable to extend the domain of convergence as far as possible, preferably missing only a region that is negligible in a mathematically rigorous sense. This desideratum is made precise in terms of the modes of convergence $(a)$ and $(b)$ mentioned in clause 2. When globally uniform convergence is impossible---that is, when a high success probability cannot be obtained and retained under any perturbation, it is still desirable, if possible, to retain it under any perturbation of a limited kind. This desideratum is made precise in terms of the mode of convergence $(c)$ mentioned in clause 2. It turns out that those weaker modes of convergence, $(a)$, $(b)$, and $(c)$, are not just each achievable, but jointly achievable, as clause 2 shows.

To clarify, when it is possible to simultaneously achieve the three proposed modes of convergence, this achievement is only necessary, rather than sufficient, for making a good learning method. So, if there is any additional desideratum that can be achieved jointly with those three, it should be added to the stock of the evaluation standards in use---in order to further constrain the candidate pool for good learning methods. The point is that, as clause 3 shows, those three modes of convergence already work together to impose an interesting, significant constraint: to achieve {\em at least} those three simultaneously, a learning method based on non-experimental data has to secure the convergence property in at least every faithful causal state---this is mandatory rather than optional. So clause 3 justifies the standard design practice of sacrificing the convergence property {\em only} in unfaithful causal states. 


The above illustrates how clauses 2 and 3 work together to address the question left by clause 1, the question of where the convergence property should be secured or sacrificed. The more traditional reaction to the impossibility result in clause 1 is to make the assumption that the true, unknown CBN is faithful. This restricts the state space $\SSS$ to the set $\SSS_\mathrm{faith}$ of all faithful causal states, which, in a sense, restores the possibility of statistical consistency: everywhere convergence is achievable with respect to the modified, easier learning problem $(\VVV, \SSS_\mathrm{faith}, \HH)$, as \cite{spirtes2000causation} show. But to simply assume that the true, unknown CBN is faithful is to take for granted a specific answer to the question of where sacrifices should be made. The proposal of this paper is that causal learning theory need not be developed on the assumption that the true, unknown CBN is faithful, or on any other variant of the faithfulness assumption. Instead, the problem posed by clause 1 can, and should, be addressed by the general strategy that underlies clauses 2 and 3: look for what can be achieved, and achieve the best we can have. So, although the above theorem is stated in an unusual way, with the first clause being nothing but a familiar result, it is so stated in order to emphasize that the difficulty posed by clause 1 should be addressed by something like clauses 2 and 3. 

The above theorem is limited in some ways. First, it only concerns a specific kind of hypothesis space: the set of the {\em Markov equivalence} hypotheses about the given set $\VVV$ of variables. This choice of a hypothesis space is made in this paper for a reason. A causal learning problem can certainly have a hypothesis space of some other kind. For example, \cite*{shimizu2006linear} study the causal learning problem $(\VVV', \SSS', \HH')$ such that $\VVV'$ is a set of continuous variables and the state space $\SSS'$ is restricted to the (so-called) linear non-Gaussian acyclic causal models (or CBNs in which each variable is required to be a linear function of its parents plus an random noise term whose distribution must be non-Gaussian, but see \cite*{zhang2009identifiability} for the extent to which this restriction might be relaxed). Under such parametric assumptions, it is unnecessary to restrict attention to Markov equivalence hypotheses or care much about the faithfulness condition.  
By way of contrast, it is the need to learn at least the true Markov equivalence hypothesis without making strong parametric assumptions that motivates \cite{spirtes2000causation} to make the faithfulness assumption---to assume away unfaithful CBNs from the state space under consideration. So, to explore the possibility of developing causal learning theory without assuming away any unfaithful CBNs, a good starting point is to study the task of learning the true Markov equivalence hypothesis, as pursued in this paper.  

The above theorem is also limited in another way: clause 3 only says that the convergence property has to be secured in {\em at least} all the faithful causal states. This raises some questions: {\em Exactly} where does the convergence property have to be secured? Also, {\em exactly} where does it have to be sacrificed? And {\em exactly} where is the sacrifice only optional but not mandatory? These questions are answered by a strengthening of the above theorem, to be presented in the next section.


\section{Main Result Strengthened} \label{sec:result2}

To strengthen clause 3 of the preceding theorem, some more definitions are required. 

A condition weaker than faithfulness is called (Pearl's) minimality \citep{pearl2009causality}. 
Say that $G$ is \textbf{minimal} to $P$ if there exists no graph $G'$ such that $\II(G) \subset \II(G') \subseteq \II(P)$. Call a causal state $(G, P)$ {\bf minimal} if $G$ is minimal to $P$. The term `minimal' can be understood intuitively this way. Suppose that $P$ is the true joint probability distribution. So the conditional independence {\em facts} are those in $\II(P)$. Let's try to explain (some of) those facts by postulating a causal structure $G$. Assuming the causal Markov condition, we have to postulate a causal graph $G$ with $\II(G) \subseteq \II(P)$. So, of the conditional independence facts in $\II(P)$, those included in $\II(G)$ are explained (namely, entailed) by the postulated causal structure $G$ but those in $\II(P) \smallsetminus \II(G)$ are left unexplained (yet)---at least they cannot be explained by (lack of) causation if we postulate $G$. If we postulate a causal structure $G$ that is minimal to $P$, it means that only a {\em minimal} set of conditional independence facts is left unexplained by (lack of) causation. In the limiting case that $\II(G) = \II(P)$, no conditional independence fact is left unexplained.
    

Call a causal state $(G, P)$ \textbf{u-minimal (unambiguously minimal)} if $G$ is minimal to $P$ and every graph minimal to $P$ is Markov equivalent to $G$. As indicated by the Venn diagram in figure \ref{fig-venn-diagram}, faithfulness is strictly stronger than u-minimality, which is in turn strictly stronger than minimality \citep{zhang2013comparison}.\footnote{For an example of a u-minimal but not faithful causal state, see \citeauthor{zhang2013comparison} (\citeyear{zhang2013comparison}, pp. 433--434). For examples of minimal but not u-minimal causal states, see \citeauthor{zhang2013comparison} (\citeyear{zhang2013comparison}, p. 431).}

	\begin{figure}[ht]
	\centering	\includegraphics[width=0.8\textwidth]{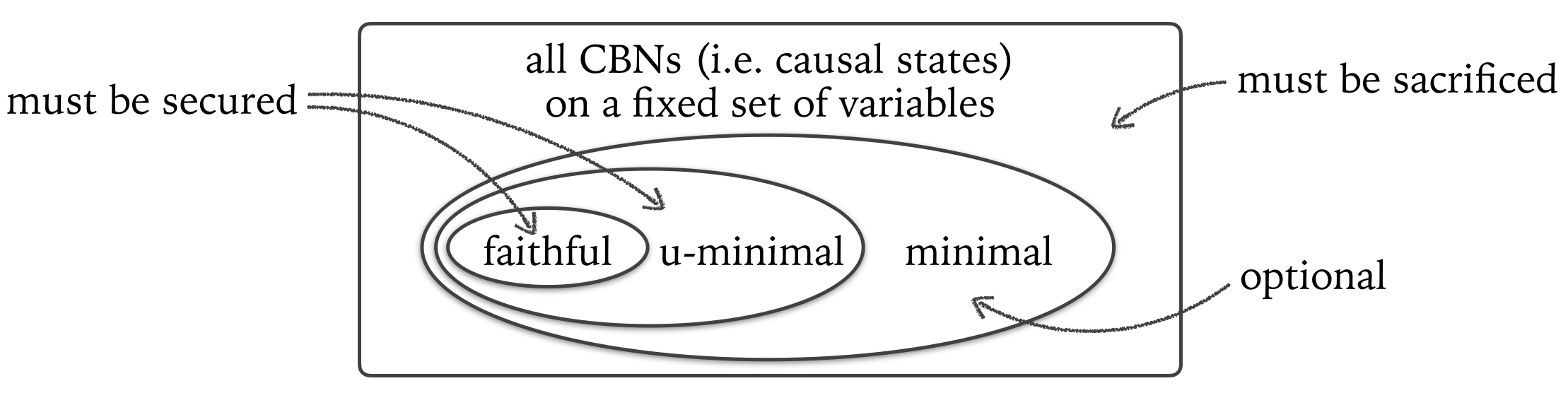}
	\caption{Summary of theorem \ref{the:general2}}
	\label{fig-venn-diagram}
	\end{figure}

\begin{theorem}\label{the:general2}
Continuing from theorem 1, we have: to achieve at least the joint mode of convergence $(a)$+$(b)$+$(c)$, the convergence property must be secured in all u-minimal causal states, must be sacrificed in all non-minimal causal states, and is optional for the other causal states (as summarized in figure \ref{fig-venn-diagram}). To be more precise:
	\ope
	\im For any learning method tackling problem $\PPP$, if it achieves the joint mode of convergence $(a)$+$(b)$+$(c)$, then it has the convergence property be
	\op
	\im[1.1] secured in all u-minimal causal states (including all faithful causal states) in $\SSS$, 
	\im[1.2] sacrificed in all causal states in $\SSS$ that are not minimal. 
	\ed 

	\im For any other causal state $s$ in $\SSS$ (i.e. minimal but not u-minimal), some but not all learning methods achieving $(a)$+$(b)$+$(c)$ converge to the truth in $s$.   
    \ede
\end{theorem}
The proof is in appendix \ref{sec:proof-of-theorem}. 

But here is the strategy that underlies the proof. It is first shown that we only need to use the two modes of convergence $(a)$ ``almost everywhere'' and $(c)$ ``adherently locally uniform'' to force the convergence property to be sacrificed in all causal states that are not minimal. This means that $(a)$+$(c)$ alone suffices to prove clause 1.2. This also means that, if the convergence property must be secured somewhere, $(a)$+$(c)$ already requires that it be secured {\em only within} the range of the minimal causal states. Then, within that particular range, the domain of convergence is extended as much as possible in order to achieve the mode of convergence $(b)$ ``maximal domain''. It is shown that, whenever the domain of convergence has not been extended enough to cover the set of all u-minimal causal states, it can always be extended {\em further} to do so. This means that, within the range of the minimal causal states, any maximal domain of convergence must cover at least the set of all u-minimal causal states, which leads to a proof of clause 1.1. To recap: modes $(a)$ and $(c)$ are used to prove clause 1.2, which is then used together with mode $(b)$ to prove clause 1.1. Clause 2 makes a pair of existence claims for each causal state $s$ that is minimal but not u-minimal: some of the learning methods that achieve $(a)$+$(b)$+$(c)$ converge to the truth in $s$, but some others do not. Those two existence claims are proved with the help of the techniques developed for proving the existence claim (clause 2) of theorem \ref{the:general1}. See appendix \ref{sec:proof-of-theorem} for the complete proof.

\section{Proof of an Important Lemma} \label{sec:proof-crucial-lemma}

This section states and proves what we call {\em the sacrifice lemma} (lemma \ref{lem-crucial} below), whose proof is particularly revealing because it {\em explains} why sacrificing the convergence property in certain causal states is a necessary cost of something good. This will lead to a proof of clause 1.2 of theorem \ref{the:general2}.

\begin{lemma}\label{lem:smallball}
Suppose that $\VVV$ is a finite set of variables, and that $\SSS$ is the set of all causal states (i.e., CBNs) on $\VVV$. With respect to the topological structure defined above (in section \ref{sec:setup2}), we have: there exists a (sufficiently small) radius $\epsilon^* >0$ such that, for any causal state $s = (G, P) \in \SSS$, the open ball $B_{\epsilon^*}(s)$ centered at $s$ with radius $\epsilon^*$ contains only causal states that share the same causal graph, namely $G$. 
\end{lemma}

\begin{proof}
Let $\delta$ be any metric chosen (in section \ref{sec:setup2}) to measure the distances between causal graphs, and let $\delta_{\min}$ be the minimal distance measured by $\delta$ between two distinct causal graphs on $\VVV$. We have that $\delta_{\min} > 0$, for two reasons: first, $\delta$ as a metric must assign a nonzero distance to any pair of distinct causal graphs; second, there are only finitely many causal graphs on the finite set $\VVV$. Let the sought radius $\epsilon^*$ be $\delta_{\min}$. Consider any two causal states $s = (G, P)$ and $s' = (G', P')$ on $\VVV$ that are less $\epsilon^*$-away from each other; that is, $d(s, s') < \epsilon^*$. It suffices to show that $G = G'$. Note that $\delta(G, G') + \Delta(P, P') = d(s, s') < \epsilon^* = \delta_{\min}$. Hence $\delta(G, G')$ is less than $\delta_{\min}$, the minimal distance between two distinct causal graphs on $\VVV$. So $G = G'$, as desired.
\end{proof}

With respect to a causal learning problem $\PPP$, the {\bf domain of convergence} of a learning method is the set of the causal states in $\SSS$ in which that learning method converges (in probability) to the truth. Then we have:

\begin{lemma}\label{lem:dense-open}
	If a learning method tackling a causal learning problem $\PPP$ converges to the truth almost everywhere, then its domain of convergence is dense in $\SSS$. If it converges to the truth with adherent local uniformity, then its domain of convergence is open in $\SSS$.
\end{lemma}

\begin{proof}
	Immediate from definitions. 
\end{proof}

Having a dense and open domain of convergence implies a significant constraint on where the convergence property must be sacrificed: 

\begin{lemma}\label{lem-crucial} \!\!{\bf (The Sacrifice Lemma)} \; Let $\PPP$ be a causal learning problem such that $\VVV$ is a finite set of variables (whether or not those variables are categorical, discrete, or continuous), that $\SSS$ is the set of all causal states on $\VVV$, and that $\HH$ is the set of the Markov equivalence hypotheses about $\VVV$. For any learning method $\hat{H}$ tackling problem $\PPP$, if $\hat{H}$ has a dense and open domain of convergence on the state space $\SSS$, then $\hat{H}$ sacrifices the convergence property in every causal state in $\SSS$ that is not minimal. 
\end{lemma}

\begin{proof}
Let $\hat{H}$ be a causal learning method with a dense and open domain of convergence. By lemma \ref{lem:smallball}, there exists a (small) radius $\epsilon^* > 0$ such that, if any two causal states are less than $\epsilon^*$-away from each other, they share the same causal structure. Suppose, for {\em reductio}, that $\hat{H}$ converges to the truth in some non-minimal causal state $s_0 = (G, P) \in \SSS$. Since the domain of convergence is open (by hypothesis), there exists a nonzero radius $\epsilon \le \epsilon^*$ such that $\hat{H}$ converges to the truth in every causal state in the open ball $B_{\epsilon}(s_0)$. Note that the Markov equivalence hypothesis $H_G$ is true in $s_0$, and hence true in every causal state in the open ball $B_{\epsilon}(s_0)$, because $\epsilon \le \epsilon^*$. (See the upper left part of figure \ref{fig-proof-of-lemma} for a picture of the present situation, in which the so-called non-minimal plane represents the set of the non-minimal causal states in which $H_G$ is true). From $s_0$ let's construct causal states $s_1, s_2$, and $s_3$ in the following three steps (as represented by the three arrows in figure \ref{fig-proof-of-lemma}):
	\begin{figure}[ht]
	\centering	\includegraphics[width=0.8\textwidth]{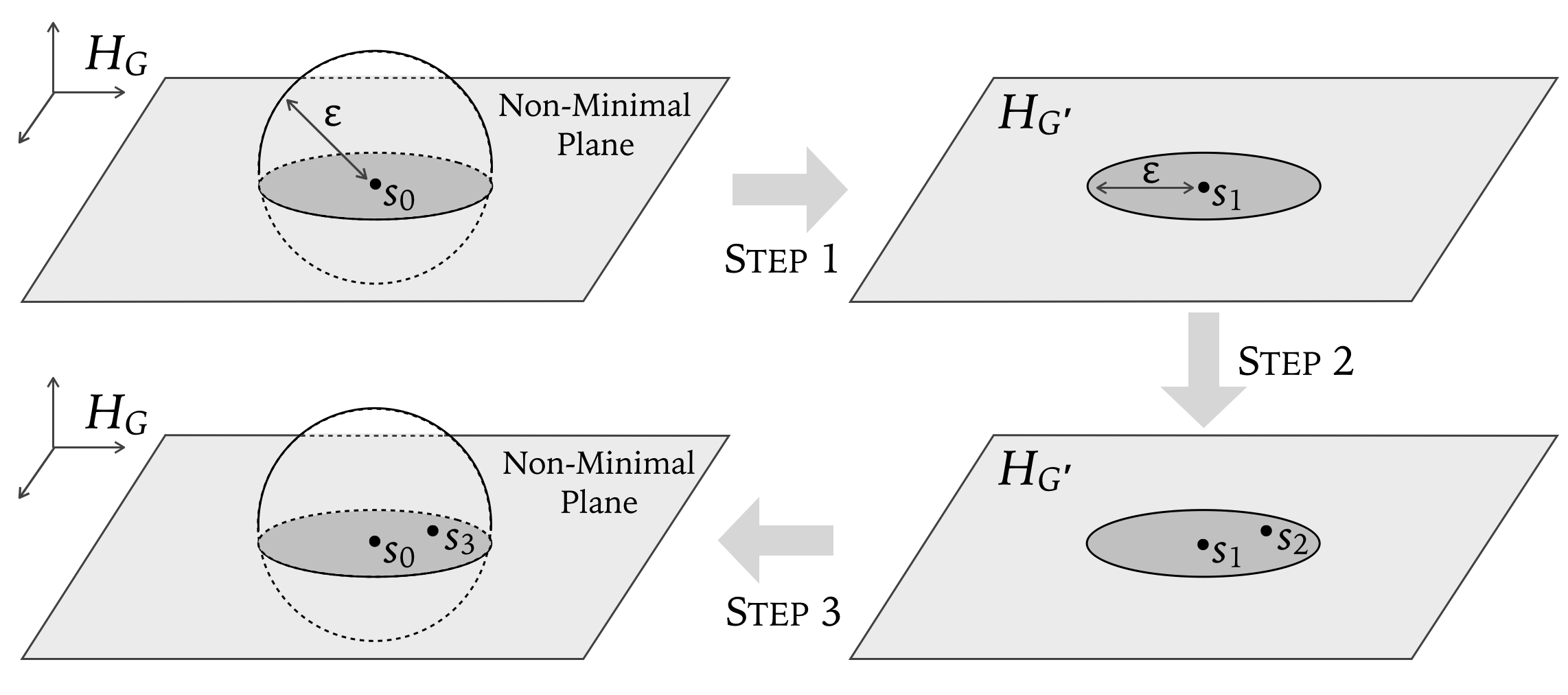}
	\caption{Constructions in the proof of lemma \ref{lem-crucial}}
	\label{fig-proof-of-lemma}
	\end{figure}
	\op 
	\im {\sc Step 1.}  
	Since causal state $s_0 = (G, P)$ is not minimal, there exists a minimal causal state $s_1 = (G', P) \in \SSS$ with $\II(G) \subset \II(G') \subseteq \II(P)$. We have thus constructed $s_1$.
	\\[-1.7em]
	\im {\sc Step 2.}
	Since $\II(G) \subset \II(G')$, $G$ and $G'$ are not Markov equivalent. So $H_{G}$ and $H_{G'}$ are two distinct, incompatible hypotheses. Causal state $s_1 = (G', P)$ has an open neighborhood $B_{\epsilon}(s_1)$ with the same radius $\epsilon$, in which all causal states share the same causal graph $G'$ (because $\epsilon \le \epsilon^*$). So $H_{G'}$ is true in every causal state in that open ball $B_{\epsilon}(s_1)$ (as depicted in the upper right part of figure \ref{fig-proof-of-lemma}). 
		Then, since the domain of convergence is dense (by hypothesis), the open ball $B_{\epsilon}(s_1)$ contains at least one causal state in which $\hat{H}$ converges to the truth $H_{G'}$---now, choose one such causal state $s_2 = (G', P')$. We have thus constructed $s_2$.
	\\[-1.7em]		
	\im {\sc Step 3.}
	Take causal state $s_2 = (G', P')$, replace the graph therein by $G$ to construct an ordered pair $s_3 = (G, P')$. Argue as follows that $s_3$ is indeed a causal state. Note that $G$ is Markov to $P'$ because $\II(G) \subseteq \II(G') \subseteq \II(P')$, where the first subset relation $\II(G) \subseteq \II(G')$ follows from the construction of $G'$ and the second subset relation $\II(G') \subseteq \II(P')$ follows from the fact that $s_2 = (G', P')$ is a causal state (i.e., CBN), which must satisfy the Markov condition. Since $G$ is Markov to $P'$, $s_3 = (G, P')$ is indeed a causal state. We have thus constructed causal state $s_3$.
	\ed 
Causal state $s_3$ has some notable properties. First, $s_3$ is in the open ball $B_{\epsilon}(s_0)$, because $d(s_3, s_0) = \delta(G, G) + \Delta(P', P) = 0 + \Delta(P', P) = \delta(G', G') + \Delta(P', P) = d(s_2, s_1) < \epsilon$. Second, $s_3$ shares with $s_2$ the same joint distribution $P'$ (and hence the same sampling distribution), so $\hat{H}$ converges to the same hypothesis in $s_2$ and in $s_3$, and that particular hypothesis is $H_{G'}$ (by the construction of $s_2$).
	It follows that $\hat{H}$ converges to a falsehood $H_{G'}$ in $s_3 = (G, P')$, which is in $B_{\epsilon}(s_0)$. Therefore, $\hat{H}$ fails to converge to the truth in some causal state in $B_{\epsilon}(s_0)$---contradiction.
\end{proof}

Clause 1.2 of theorem \ref{the:general2} follows immediately from the previous two results: lemmas \ref{lem:dense-open} and \ref{lem-crucial}. Note that the above proof does not restrict the variables in $\VVV$ to be categorical variables. So, clause 1.2 of theorem \ref{the:general2} actually holds for any kinds of variables, be they categorical, discrete, or continuous. 

To summarize, we submit that a causal learning method should, if possible, achieve at least the mode of convergence $(a)$ ``almost everywhere'' plus the mode of convergence $(c)$ ``adherently locally uniform'', which by lemma \ref{lem:dense-open} implies having a dense and open domain of convergence, which by lemma \ref{lem-crucial} incurs a necessary cost: having the convergence property be sacrificed in every non-minimal causal state. 

\section{Closing: Some Possibilities for Future Research}
\label{sec:conclusion}

The main results of this paper are theorems \ref{the:general1} and \ref{the:general2}. Although they concern causal learning problems that involve only categorical variables, we conjecture that they can be generalized to cover some other causal learning problems, such as problems in which all causal states under consideration are linear Gaussian structural equation models. We also think that it should be possible to suitably generalize the main results to cover finite sets of variables of many different kinds. Our optimism is based on two observations. First, the key lemma \ref{lem-crucial} is applicable to any kind of variable. Second, with discrete or continuous variables that have an infinite range of possible values to take, the state space $\SSS$ can be too large to be captured by a finite-dimensional Euclidean space of parameters. In that case, it makes no sense to talk about mathematical negligibility as Lebesgue measure zero, which is popularized in the causal discovery community by \cite{spirtes2000causation}. But it still makes sense to understand mathematically negligible sets in topological terms, as nowhere dense sets or even meager sets (defined as unions of countably many nowhere dense sets). In fact, this is the main reason why we opt for the more applicable, topological conception of negligibility.

\acks{We are indebted to Kevin Kelly, Clark Glymour, Frederick Eberhardt, Christopher Hitchcock, Peter Spirtes, Kun Zhang, Konstantin Genin, and three anonymous referees for their very helpful comments on earlier drafts of this paper. Lin's research was supported by the University of California at Davis Startup Funds. Zhang's research was supported in part by the Research Grants Council of Hong Kong under the General Research Fund LU13600715, and by a Faculty Research Grant from Lingnan University.}

\appendix

\section{Proof of the Existence Result (Clause 2) of Theorem \ref{the:general1}}\label{sec:proof-of-existence}

The existence result (clause 2) of theorem \ref{the:general1} requires a long proof, which is broken down into three parts: We start with some topological preliminaries (appendix \ref{sec:lemma:nowhere-dense}), followed by some statistical preliminaries (appendix \ref{sec:lemma:independence}), before we finally construct a learning method that witnesses the existence claim (appendix \ref{sec:lemma:construction}).

Throughout this appendix, $\VVV$ is assumed to be a finite set of categorical variables, and $\SSS$ is the set of all causal states on $\VVV$.

\subsection{Topological Preliminaries}\label{sec:lemma:nowhere-dense}

Let $k$ denotes the number of assignments of values to all variables in $\VVV$. So, any joint distribution $P$ of $\VVV$ is determined by the (joint) probabilities $p_1, p_2, \ldots, p_k$ that $P$ distributes to those $k$ assignments of values, respectively, and hence $P$ can be identified with a point $(p_1, p_2, \ldots, p_k)$ in the $k$-dimensional Euclidean space $\mathbb{R}^k$. Note that a joint distribution $P$ satisfies a conditional independence statement ``$\UU \indep \VV \given \WW$'' if and only if the following equation holds:
	\begin{eqnarray*}
	P(\UU = \uu, \VV = \vv, \WW = \ww) \cdot P(\WW = \ww) \quad && 
	\\
	- \, P(\UU = \uu , \WW = \ww)  \cdot P(\VV = \vv , \WW = \ww)	 &=& 0\,.
	\end{eqnarray*}
Every term $P(\cdots)$ on the left side is a marginal probability, which can be expressed by a sum of some of the joint probabilities $p_1, p_2, \ldots, p_k$. So the left side can be expressed as a (second-degree) polynomial in $k$ variables $p_1, p_2, \ldots, p_k$. More generally, every conditional independence statement $\sigma =$ ``$\UU \indep \VV \given \WW$'' can be represented by a polynomial function $f_\sigma(x_1, x_2, \ldots, x_k)$ in $k$ real-valued variables in this sense: a joint distribution $P$ satisfies conditional independence statement $\sigma =$ ``$\UU \indep \VV \given \WW$'' if and only if $f_\sigma(p_1, p_2, \ldots, p_k) = 0$.

Hold a causal graph $G$ fixed. Consider an arbitrary joint distribution $P$ that satisfies the Markov condition with $G$ (i.e., can form a CBN with $G$). It is well known that the joint distribution $P$ factors according to the graph $G$ into some conditional, marginal distributions. To be more specific, each joint probability $p_i$ in $P$ can be expressed as the product of some conditional probabilities, each of which is the probability for a variable to take a certain value conditional on its parents (in $G$) taking certain values (see the equations in figure \ref{fig-condi-prob} for an example with three binary variables). 
	\begin{figure}[ht]
	\centering	\includegraphics[width=0.95\textwidth]{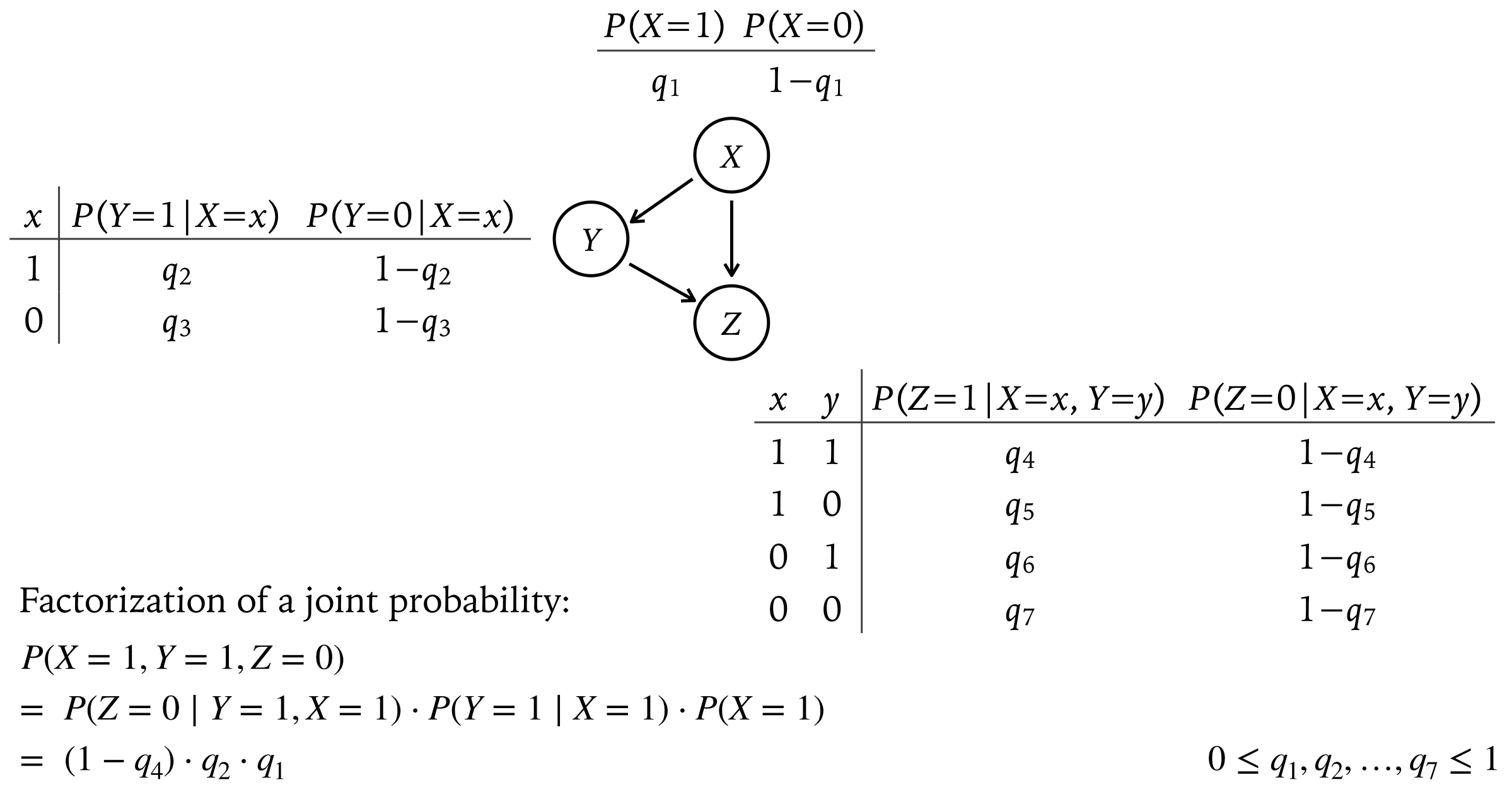}
	\caption{Conditional probability tables}
	\label{fig-condi-prob}
	\end{figure}
For convenience, we adopt the convention that, when a variable has no parent, its probability conditional on ``its parents'' means its unconditional probability. The conditional probabilities just mentioned are the real numbers in the conditional probability tables that are standardly used to represent a Bayesian network (see the tables in figure \ref{fig-condi-prob} for an example). So $P$ is determined by its joint probabilities $p_1, p_2, \ldots, p_k$, each of which can then be expressed as a polynomial in some of the conditional probabilities $q_1, q_2, \ldots, q_m$, where each conditional probability $q_i$ can take any value in the unit interval $[0, 1]$ and $m$ is in general less than $k$ (in figure \ref{fig-condi-prob}, $m = 7 < k = 2^3 = 8$). So a conditional independence statement $\sigma =$ ``$\UU \indep \VV \given \WW$'' can be represented by another polynomial $g_\sigma$, so that 
	\begin{eqnarray*}
	&& \mbox{$P$ satisfies $\sigma =$ ``$\UU \indep \VV \given \WW$''}
	\\
	&\Leftrightarrow & f_\sigma(p_1, p_2, \ldots, p_k) = 0
	\\
	&\Leftrightarrow & g_\sigma(q_1, q_2, \ldots, q_m) = 0
	\end{eqnarray*} 

More generally, let $G$ be a causal graph on a finite set of categorical variables, and let $\DDD_G$ be the set of the joint distributions that are Markov to $G$ (i.e., the joint distributions which can form a CBN with $G$). It is well known that the above provides a smooth parametrization of $\DDD_G$ by an $m$-dimensional parameter space, the $m$-dimensional unit cube $[0, 1]^m$; under this parametrization, every conditional independence statement is represented by a polynomial \citep{meek1995strong}.



\begin{lemma}
	Let $\sigma =$ ``$\UU \indep \VV \given \WW$'' be a conditional independence statement about $\VVV$. The joint distributions in $\DDD_G$ that violate $\sigma =$ ``$\UU \indep \VV \given \WW$'' form an open subset of $\DDD_G$.
\end{lemma}

\begin{proof}
	The joint distributions in $\DDD_G$ that violate $\sigma$ form the set represented by $g_\sigma^{-1}[\mathbb{R} \smallsetminus \{0\}]$, which is an open set because it is a pre-image of an open set ($\mathbb{R} \smallsetminus \{0\}$) under a continuous function (the polynomial function $g_\sigma$).
\end{proof}

\begin{lemma}
	Continuing from the preceding lemma, suppose further that $G$ does not entail the conditional independence statement $\sigma =$ ``$\UU \indep \VV \given \WW$''. Then the joint distributions in $\DDD_G$ that violate $\sigma =$ ``$\UU \indep \VV \given \WW$'' form an open, dense subset of $\DDD_G$.
\end{lemma}

\begin{proof}
By the preceding lemma, the set of the joint distributions in $\DDD_G$ that violate $\sigma =$ ``$\UU \indep \VV \given \WW$'' is open in $\DDD_G$. To show that this set is dense in $\DDD_G$, suppose for {\em reductio} that it is not dense. Then $\DDD_G$ has an open subset on which $\sigma =$ ``$\UU \indep \VV \given \WW$'' is satisfied. Using the conditional probability parametrization described above, it follows that the $m$-dimensional unit cube $[0, 1]^m$ has an open subset $O$ on which $g_\sigma(q_1, q_2, \ldots, q_m) = 0$. It follows that $g_\sigma$ is identically zero on $O$ and all partial derivatives of $g_\sigma$ are also identically zero on $O$, which implies that the Taylor series expansion of $g_\sigma$ only has zero coefficients, which implies that $g_\sigma$ is identically zero on the entire cube $[0, 1]^m$. So the conditional independence statement $\sigma =$ ``$\UU \indep \VV \given \WW$ is satisfied by {\em all} distributions in $\DDD_G$. It follows that $G$ entails $\sigma =$ ``$\UU \indep \VV \given \WW$---contradiction.
\end{proof}

\begin{lemma}
	Let $G$ be a causal graph on $\VVV$, and $\Sigma$ be a finite set of (some, possibly not all) conditional independence statements that $G$ does not entail. Then we have:
	\begin{enumerate}
	\item The joint distributions in $\DDD_G$ that violate every conditional independence statement in $\Sigma$ form an open, dense subset of $\DDD_G$.
	
	\item The joint distributions in $\DDD_G$ that satisfy at least one conditional independence statement in $\Sigma$ form a nowhere dense subset of $\DDD_G$.	
	\end{enumerate}
\end{lemma}

\begin{proof}
	Clause 1 follows immediately from the preceding lemma and the following, familiar fact in general topology: open, dense subsets are closed under finite conjunctions. Clause 2 follows from clause 1 for two reasons: first, the set mentioned in clause 2 is the complement (in $\DDD_G$) of the set mentioned clause 1; second, it is a familiar fact in general topology that any complement of an open, dense, subset is a nowhere dense subset. 
\end{proof}


The above results concern spaces of joint distributions, and can be carried over to spaces of causal states as follows. Let $\SSS_G$ be the topological space of the causal states whose graphs are identical to $G$. A causal state $s = (G, P)$ is said to {\bf satisfy} (or {\bf violate}) a conditional independence statement $\sigma$ if the underlying joint distribution $P$ satisfies (or violates) $\sigma$.

\begin{lemma}
	Let $G$ be a causal graph on $\VVV$, and $\Sigma$ be a finite set of (some, possibly not all) conditional independence statements that $G$ does not entail. Then we have:
	\begin{enumerate}
	\item The causal states in $\SSS_G$ that violate every conditional independence statement in $\Sigma$ form an open, dense subset of $\SSS_G$.
	
	\item The causal states in $\SSS_G$ that satisfy at least one conditional independence statement in $\Sigma$ form a nowhere dense subset of $\SSS_G$.	
	\end{enumerate}
\end{lemma}

\begin{proof}
	Immediate from the previous lemma and the fact that $\DDD_G$ is homeomorphic to $\SSS_G$, with the homeomorphism: $P \mapsto (G, P)$, which maps each joint distribution $P$ in $\DDD_G$ to a causal state $(G, P)$ in $\SSS_G$.  
\end{proof}

\begin{lemma}
	For any causal graph $G$ on $\VVV$, $\SSS_G$ is open in $\SSS$.   
\end{lemma}

\begin{proof}
	Immediate from lemma \ref{lem:smallball} (in section \ref{sec:proof-crucial-lemma}).
\end{proof}

\begin{lemma}\label{lem:open-nowheredense}
	Let $G$ be a causal graph on $\VVV$, and $\Sigma$ be a set of (some, possibly not all) conditional independence statements that $G$ does not entail. Then we have:
	\begin{enumerate}
	\item The causal states in $\SSS_G$ that violate every conditional independence statement in $\Sigma$ form an open subset of $\SSS$.
	
	\item The causal states in $\SSS_G$ that satisfy at least one conditional independence statement in $\Sigma$ form a nowhere dense subset of $\SSS$.	
	\end{enumerate}
\end{lemma}

\begin{proof}
	Immediate from the previous two lemmas, together with the following, familiar facts in general topology: If a set is an open subset of an open subset of a space, it is open in the space. If a set is a nowhere dense subset of a subset of a space, it is nowhere dense in the space.  
\end{proof}

\begin{proposition}\label{pro-open}
	Every causal state $s = (G, P)$ in $\SSS$ has a (sufficiently small) open neighborhood such that, for any causal state $s' = (G', P')$ in that open neighborhood, $G' = G$ and $\II(P') \subseteq \II(P)$. Or in words, every causal state $s$ in $\SSS$ has a (sufficiently small) open neighborhood in which every causal state shares with $s$ the same causal graph and violates at least all the conditional independence statements that $s$ violates.
\end{proposition}

\begin{proof}
	Consider an arbitrary causal state $s = (G, P)$ in $\SSS$. Since $s = (G, P)$ is a causal state, we have that $G$ is Markov to $P$, and it follows that $G$ does not entail any conditional independence statement that $P$ violates. This allows us to apply lemma \ref{lem:open-nowheredense} to graph $G$ together with $\Sigma$ being the set of the conditional independence statements that $P$ violates---namely, those that $s$ violates. Then, by clause 1 of lemma \ref{lem:open-nowheredense}, we have: the set of the causal states in $\SSS_G$ that violate at least all the conditional independence statements that $s$ violates is an open set in $\SSS$. This open set is an open neighborhood of $s = (G, P)$ with the sought properties. The present proposition follows.
\end{proof}

\begin{proposition}\label{pro-unfaithful-nowhere-dense}
	In the space $\SSS$ of all causal states on $\VVV$, the set of the unfaithful ones is nowhere dense and so is the set of the non-minimal ones.
\end{proposition}

\begin{proof}
	Applying the second clause of lemma \ref{lem:open-nowheredense} to any causal graph $G$ on $\VVV$ together with $\Sigma$ being the set of all conditional independence statements that $G$ does not entail, we have: the unfaithful causal states in $\SSS_G$ form a set $\SSS_G^\textrm{unf}$, which is nowhere dense in $\SSS$. Then, the set of the unfaithful causal states in $\SSS$ is nowhere dense in $\SSS$, for two reasons: first, this set is the finite union of the nowhere dense subsets $\SSS_G^\textrm{unf}$ such that $G$ is a causal graph on $\VVV$; second, nowhere dense subsets are closed under finite unions. Moreover, the set of the non-minimal causal states in $\SSS$ is also nowhere dense in $\SSS$, for two reasons: first, it is a subset of a nowhere dense set, namely, the set of the unfaithful ones; second, any subset of a nowhere dense set is nowhere dense.
\end{proof}

\subsection{Statistical Preliminaries}\label{sec:lemma:independence}

\begin{lemma}[Hoeffding's Inequality for Empirical Measures]\label{lem-hoeffding}
Let $\hat{P}_n$ be the empirical distribution (namely, frequency counts) of $n$ observations obtained by IID sampling from a categorical distribution $P$. Then, for any $\epsilon > 0$ and for any sample size $n$, we have:
	\begin{eqnarray*}
	\Prob\left( \Delta(\hat{P}_n, P) < \epsilon\right) &\ge & 1 - 2^k e^{-2n\epsilon^2} \,,
	\end{eqnarray*}
where $\Prob_s$ denotes the sampling distribution generated by $P$, namely the $\infty$-fold probability measure generated by $P$ under the IID assumption, and $\Delta$ is the total variation distance, and $k$ is a constant denoting the number of the categories of $P$.
\end{lemma}

\begin{proof}
It is routine to prove this result in probability theory. Here is one of the standard forms of Hoeffding's Inequality:
	\begin{eqnarray*}
	\Prob\left( \left| \overline{X}_n - \Exp[\overline{X}_n] \right| \ge \epsilon\right) &\le & 2e^{-2n\epsilon^2} \,.
	\end{eqnarray*}
Let ${\cal X}$ be the set of the $k$ given categories. So the set of the relevant events is $2^{\cal X}$. 
	Let $\overline{2^{\cal X}}$ be a subset of $2^{\cal X}$ constructed as follows: for every pair $(A, A')$ of sets that form a partition of ${\cal X}$, choose exactly one of the two sets, $A$ or $A'$, and put it in $\overline{2^{\cal X}}$. Note that the cardinality of $\overline{2^{\cal X}}$ is $2^{k-1}$. 
For each proposition $A \in 2^{\cal X}$, apply Hoeffding's inequality to $| \hat{P}_n(A) - \Exp[\hat{P}_n(A)] |$, which is equal to $| \hat{P}_n(A) - P(A) |$, so we have:
	\begin{eqnarray*}
	\Prob\left( \left| \hat{P}_n(A) - P(A) \right| \ge \epsilon\right) &\le & 2e^{-2n\epsilon^2} \,.
	\end{eqnarray*}
Then we have:
	\begin{eqnarray*}
	\Prob\left( \Delta(\hat{P}_n, P) \ge \epsilon\right) 
	&=& \Prob\left( \max_{A \in 2^{\cal X}}\left| \hat{P}_n(A) - P(A) \right| \ge \epsilon\right)
		\\
	&=& \Prob\left( \bigvee_{A \in 2^{\cal X}} \left(\left| \hat{P}_n(A) - P(A) \right| \ge \epsilon\right)\right) 
		\\ 
	&=& \Prob\left( \bigvee_{A \in \overline{2^{\cal X}}} \left(\left| \hat{P}_n(A) - P(A) \right| \ge \epsilon\right)\right)
		\\
	&\le & \sum_{A \in \overline{2^{\cal X}}} \Prob\left( 
		\left| \hat{P}_n(A) - P(A) \right| \ge \epsilon 
		\right) 
		\\
	&\le & \sum_{A \in \overline{2^{\cal X}}}
	2e^{-2n\epsilon^2} \quad=\quad 2^{k-1} \cdot 2e^{-2n\epsilon^2} \quad=\quad 2^{k} e^{-2n\epsilon^2} \,.
	\end{eqnarray*}
So $\Prob\left( \Delta(\hat{P}_n, P) < \epsilon\right) \ge 1 - 2^{k} e^{-2n\epsilon^2}$, as required.
\end{proof}

\begin{proposition}\label{pro-test-of-independence}
	Every conditional independence statement $\UU \indep \VV \given \WW$ that involves only categorical variables has a test $T$ with the following convergence properties:  
	\ope
	\im on the space of all possible distributions of $\XX = \UU \cup \VV \cup \WW$ that satisfy the independence statement $\UU \indep \VV \given \WW$, test $T$ converges to the truth everywhere with global uniformity;  
	\im on the space of all possible distributions of $\XX = \UU \cup \VV \cup \WW$ that violate the independence statement $\UU \indep \VV \given \WW$, test $T$ converges to the truth everywhere with local uniformity.
   	\ede 
\end{proposition}

\begin{proof}
Let $\UU, \VV, \WW$ be three disjoint sets of categorical variables. We are going to test the hypothesis that $\UU$ and $\VV$ are independent given $\WW$.  Consider an arbitrary joint probability distribution $P$ of $\XX = \UU \cup \VV \cup \WW$. Let $P(\uu, \vv, \ww)$ abbreviate $P(\UU=\uu, \VV=\vv, \WW=\ww)$, and similarly for $P(\uu, \vv)$, $P(\ww)$, $P(\xx)$ etc. Define the following $L_1$-distance of $P$ from the independence of $\UU$ and $\VV$ given $\WW$: 
	\begin{eqnarray*}
	L_1(P) &=& \sum_{\uu, \vv, \ww}
		\big| P(\uu, \vv, \ww) P(\ww) - P(\uu, \ww) P(\vv, \ww) \big| \,,
	\end{eqnarray*}
where $\uu, \vv, \ww$ range over the possible values of $\UU, \VV, \WW$, respectively. Let $\hat{P}_n$ be the empirical distribution (namely, frequency counts) of $n$ observations. (So $\hat{P}_n$ is a random probability distribution of $\XX = \UU \cup \VV \cup \WW$.) It suffices to prove that the existence claim is witnessed by the following test:
	\op
	\im Accept the hypothesis of conditional independence if $L_1(\hat{P}_n) <  \frac{1}{n^{1/4}}$.
	\im Reject that hypothesis otherwise.
	\ed 
We will need to bound $|L_1(P) - L_1(Q)|$, where $P$ and $Q$ are two arbitrary probability distributions of $\XX = \UU \cup \VV \cup \WW$. Bound it as follows:
	\begin{eqnarray*}
	&& |L_1(P) - L_1(Q)| 
		\\
	&=& \Big| \sum_{\uu, \vv, \ww} \big| P(\uu, \vv, \ww) P(\ww) - P(\uu, \ww) P(\vv, \ww) \big|
		- \sum_{\uu, \vv, \ww} \big| Q(\uu, \vv, \ww) Q(\ww) - Q(\uu, \ww) Q(\vv, \ww) \big| \, \Big|
		\\[0.5em]
	&\le & \sum_{\uu, \vv, \ww} \Big| \big| P(\uu, \vv, \ww) P(\ww) - P(\uu, \ww) P(\vv, \ww) \big|
		- \big| Q(\uu,
	 \vv, \ww) Q(\ww) - Q(\uu, \ww) Q(\vv, \ww) \big| \Big|
		\\[0.5em]
	&\le & \sum_{\uu, \vv, \ww} \Big( \big| P(\uu, \vv, \ww) P(\ww) - Q(\uu, \vv, \ww) Q(\ww) \big|
		+ \big| P(\uu, \ww) P(\vv, \ww) - Q(\uu, \ww) Q(\vv, \ww) \big| \Big)
		\\
	&& \;\mathrm{by } \big||a-b| - |a'-b'|\big| \le |a-a'| + |b+b'|
		\\[0.5em]
	&= & \sum_{\uu, \vv, \ww} \big| P(\uu, \vv, \ww) P(\ww) - Q(\uu, \vv, \ww) Q(\ww) \big|
		+ \sum_{\uu, \vv, \ww} \big| P(\uu, \ww) P(\vv, \ww) - Q(\uu, \ww) Q(\vv, \ww) \big| \,.
	\end{eqnarray*}
Then we are going to bound the first and second terms, respectively. Bound the first term as follows:
	\begin{eqnarray*}
	&& \sum_{\uu, \vv, \ww} \big| P(\uu, \vv, \ww) P(\ww) - Q(\uu, \vv, \ww)  Q(\ww) \big|
		\\
	&=& \sum_{\uu, \vv, \ww} \Big| \big(P(\uu, \vv, \ww) - Q(\uu, \vv, \ww)\big) \cdot P(\ww)  + \ Q(\uu, \vv, \ww) \cdot \big(P(\ww) - Q(\ww)\big) \Big| 
		\\
	&\le & \sum_{\uu, \vv, \ww} \Big| \big(P(\uu, \vv, \ww) - Q(\uu, \vv, \ww)\big) \cdot P(\ww) \Big|  + \sum_{\uu, \vv, \ww} \Big| Q(\uu, \vv, \ww) \cdot \big(P(\ww) - Q(\ww)\big) \Big|
		\\
	&\le & \sum_{\uu, \vv, \ww} \big| P(\uu, \vv, \ww) - Q(\uu, \vv, \ww) \big| + \sum_{\ww} \Big( \big| P(\ww) - Q(\ww) \big| \cdot
	\sum_{\uu, \vv} Q(\uu, \vv, \ww) \Big)
		\\
	&\le & \sum_{\uu, \vv, \ww} \big| P(\uu, \vv, \ww) - Q(\uu, \vv, \ww) \big|  + \sum_{\ww} \Big( \big| P(\ww) - Q(\ww) \big| \cdot Q(\ww) \Big)
		\\
	&\le & \sum_{\uu, \vv, \ww} \big| P(\uu, \vv, \ww) - Q(\uu, \vv, \ww) \big| \;+\; \sum_{\ww} \big| P(\ww) - Q(\ww) \big| 
		\\
	&\le & \sum_{\xx} \big|P(\xx) - Q(\xx)\big| \;+\; \sum_{\xx} \big| P(\xx) - Q(\xx) \big|
		\\
	&=& 2 \cdot \sum_{\xx} \big|P(\xx) - Q(\xx)\big| 
		\\
	&=& 4 \, \Delta(P, Q) \,.
	\end{eqnarray*}
The last step follows because $\sum_{\xx} \big|P(\xx) - Q(\xx)\big| = 2 \Delta(P, Q)$, which is a consequence of the fact that $\Delta(P, Q)$ denotes the total variation distance between $P$ and $Q$. The second term can be bounded in the same way:
	\begin{eqnarray*}
	\sum_{\uu, \vv, \ww} \big| P(\uu, \ww) P(\vv, \ww) - Q(\uu, \ww) Q(\vv, \ww) \big| 
	&\le & 4 \, \Delta(P, Q) \,.
	\end{eqnarray*}	
So $|L_1(P) - L_1(Q)|$ can be bounded neatly as follows: 
	\begin{eqnarray}
	|L_1(P) - L_1(Q)| &\le & 8 \, \Delta(P, Q) \,. \label{eqn-bound}
	\end{eqnarray}
Let $P$ be the (unknown) true distribution under the null hypothesis that the conditional independence statement holds. So $L_1(P) = 0$. Let $\hat{P}_n$ be the random empirical distribution generated from $P$ with sample size $n$. Consider the following inequality:
	$$
	L_1(\hat{P}_n) 
	\;=\; \big| L_1(\hat{P}_n) - L_1(P) \big| 
	\;\le\; 8 \, \Delta(\hat{P}_n, P) \;<\;  \frac{1}{n^{1/4}} \,.
	$$
This inequality holds with a probability at least $1 - 2^k e^{-2n\left(\frac{1}{8n^{1/4}}\right)^2 } = 1 - 2^k e^{-\frac{\sqrt{n}}{32}}$ (by inequality (\ref{eqn-bound}) and lemma \ref{lem-hoeffding}), which converges to $1$ as $n$ tends to infinity. Also note that this probability bound holds for all distributions under the null hypothesis. So clause 1 follows. 

Now, let's turn to how the test performs under the alternative hypothesis that the conditional independence statement does not hold. Let $P^*$ be the (unknown) true distribution under the alternative hypothesis. So $L_1(P^*) > 0$. Let $P$ an arbitrary distribution in the open ball $B_{L_1(P^*)/32}\left( P^* \right)$. Let $\hat{P}_n$ be the random empirical distribution generated from $P$ with sample size $n$.
  Consider the following inequality:
	\begin{eqnarray*}
	L_1(\hat{P}_n) 
	&\ge & L_1(P^*) - \big| L_1(P^*) - L_1(P) \big| - \big| L_1(P) - L_1(\hat{P}_n) \big|
		\\
	&\ge & L_1(P^*) - 8\, \Delta(P^*, P) - 8 \, \Delta(P, \hat{P}_n)
		\\
	&>& L_1(P^*) - 8 \left(\frac{L_1(P^*)}{32}\right) - 8 \left(\frac{L_1(P^*)}{16}\right)
		\\
	&=& \frac{1}{4} L_1(P^*)
		\\
	&\ge & \frac{1}{n^{1/4}}
	\end{eqnarray*}
This inequality holds with a probability at least $1 - 2^k e^{-2n\left(\frac{L_1(P^*)}{16}\right)^2} = 1 - 2^k e^{-\frac{n}{128} L_1(P^*)^2}$, for any joint distribution $P$ in the open ball $B_{L_1(P^*)/32}\left( P^* \right)$ and for any $n$ large enough to guarantee that $\frac{1}{4} L_1(P^*) \ge \frac{1}{n^{1/4}}$ (by inequality (\ref{eqn-bound}) and lemma \ref{lem-hoeffding}). Also note that this probability lower bound $1 - 2^k e^{-\frac{n}{128} L_1(P^*)^2}$ converges to $1$ as $n$ tends to infinity. So locally uniform convergence holds. This establishes clause 2.
\end{proof}

The above proof actually establishes not just convergence in probability but also almost sure convergence, which follows from two things: the Borel-Cantelli lemma,\footnote
	{
	For a review of Borel-Cantelli lemma, see chapter 14 of \cite{feller1957introduction}.
    }  
and the fact that the error probabilities in question converge to zero quickly enough so that they sum to a finite number. Indeed, under the null hypothesis, the sum of the error probabilities is $\sum_{n=1}^{\infty} 2^k e^{-\frac{\sqrt{n}}{32}} < \infty$. Under the alternative hypothesis, the sum of the error probabilities is $\sum_{n=1}^{\infty} 2^k e^{-\frac{n}{128} L_1(P^*)^2} < \infty$.

\subsection{Construction of Learning Methods}\label{sec:lemma:construction}

Given a finite set $\VVV$ of variables, the following is an algorithm for constructing learning methods that will be shown to witness the existence claim in theorem \ref{the:general1}. 


\begin{description}
\im[Step 1.] Let each conditional independence statement about $\VVV$ be associated with a test of it that achieves the convergence properties established in proposition \ref{pro-test-of-independence}. Combine those tests into a single ``super'' test $T$, which maps each data set $(\xx_1, \ldots, \xx_n)$ to the set $\Sigma = T(\xx_1, \ldots, \xx_n)$ of all the conditional independence statements accepted by their associated tests given data set $(\xx_1, \ldots, \xx_n)$. 

\im[Step 2.] Linearly order all Markov hypotheses about $\VVV$ into a sequence $H_{G_1}, H_{G_2}, \ldots, H_{G_k}$ such that $\II(G_i) \supset \II(G_j)$ implies $i < j$.

\im[Step 3.] Construct a function $F$ that maps each set $\Sigma$ of conditional independence statements about $\VVV$ to the first hypothesis $H_{G_i}$ in the sequence such that $\II(G_i) \subseteq \Sigma$.
\im[Step 4.] Construct learning method $\hat{H} = F \circ T$.
\end{description}  

A graph $G$ on $\VVV$ is said to be {\bf minimal} to a set $\Sigma$ of conditional independence statements if there is no graph $G'$ on $\VVV$ such that $\II(G) \subset \II(G') \subseteq \Sigma$.

\begin{lemma}\label{lem-preservation}
There is a learning method that can be constructed from the above procedure. Furthermore, any such learning method $\hat{H} = F \circ T$ has the following properties:
	\ope 
	\im Whenever $F(\Sigma) = H_G$, then $G$ is minimal to $\Sigma$. 
	\im Whenever $F(\Sigma) = H_G$, then $F(\Sigma') = H_G$ for any set $\Sigma'$ with $\II(G) \subseteq \Sigma' \subseteq \Sigma$.
    \ede 
\end{lemma}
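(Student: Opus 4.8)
The plan is to prove the three assertions — that the procedure of Steps~1--4 yields a well-defined learning method, and that any resulting $\widehat H = F\circ T$ satisfies properties 1 and 2 — essentially by unwinding the definitions, together with one elementary structural observation.

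For existence, I would check that each step is executable. Step~1 is immediate from Proposition~\ref{pro-test-of-independence}: $\VVV$ has finitely many variables and hence finitely many conditional independence statements, each of which has a test with the stated properties, so the ``super'' test $T$ is well-defined. Step~2 is possible because the relation ``$H_{G_i}$ precedes $H_{G_j}$ when $\II(G_i)\supset\II(G_j)$'' is a strict partial order on the finitely many causal hypotheses — irreflexivity, transitivity and antisymmetry all follow from those of $\supset$ — so it extends to a linear order. The one point needing a short argument is that the function $F$ of Step~3 is total, i.e.\ that for every set $S$ of conditional independence statements there is at least one index $i$ with $\II(G_i)\subseteq S$; here I would invoke the complete DAG on $\VVV$, which entails no conditional independence statement, so its Markov equivalence class appears in the Step-2 sequence as some $H_{G_i}$ with $\II(G_i)=\emptyset\subseteq S$. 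Composing in Step~4 then gives a genuine learning method.

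For property~1, suppose $F(S)=H_G$, so that $\II(G)\subseteq S$ and, writing $H_G=H_{G_i}$ in the Step-2 sequence, $\II(G_j)\not\subseteq S$ for every $j<i$. If $G$ were not minimal to $S$, there would be a graph $G'$ with $\II(G)\subset\II(G')\subseteq S$; the Markov equivalence class of $G'$ appears in the sequence as some $H_{G_j}$ with $\II(G_j)=\II(G')$, so $\II(G_i)\subset\II(G_j)$ forces $j<i$, yet $\II(G_j)\subseteq S$ — contradicting the choice of $i$ as the first qualifying index. For property~2, suppose $F(S)=H_{G_i}$ and $\II(G_i)\subseteq S'\subseteq S$. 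Then $G_i$ still qualifies for $F(S')$; and for any $j<i$, $\II(G_j)\subseteq S'$ would give $\II(G_j)\subseteq S$ since $S'\subseteq S$, again contradicting minimality of the index $i$ for $S$. Hence $G_i$ remains the first qualifying hypothesis and $F(S')=H_{G_i}=H_G$.

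None of these steps is deep. The only place a genuine idea is needed is the totality of $F$, which is handled by the complete-graph observation; the rest is careful bookkeeping with the linear order of Step~2, where the only subtlety to watch is the distinction between an arbitrary graph $G'$ and the particular representative $G_j$ of its Markov equivalence class that occurs in the sequence.
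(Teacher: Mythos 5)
Your proposal is correct and follows essentially the same route as the paper: existence by checking that each step of the procedure is executable (with the empty-independence-set graph witnessing totality of $F$), clause 1 by a reductio that passes from an arbitrary $G'$ to its Markov-equivalence representative in the Step-2 sequence and contradicts the choice of the first qualifying index, and clause 2 by observing that shrinking $S$ to $S'$ cannot make any earlier hypothesis qualify while $G_i$ still does. The only difference is presentational: you make explicit the partial-order extension in Step 2 and name the complete DAG, which the paper leaves as ``obvious'' and ``an elementary result,'' respectively.
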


\begin{proof}
The existence of such a learning method follows from the following three facts. First, there exists a ``super'' test $T$ of conditional independence with the property required in step 1 (by proposition \ref{pro-test-of-independence}). Second, there exists a sequence of causal hypotheses with the property required in step 2 (which is obvious because there are only finitely many hypotheses to be ordered). Finally, function $F$ is well-defined (because, as an elementary result in the theory of Bayesian networks, for each set $\Sigma$ of conditional independence statements about $\VVV$, there exists a graph $G$ on $\VVV$ such that $\II(G) = \varnothing \subseteq \Sigma$). 

Consider an arbitrary learning method $\hat{H}$ that can be constructed from the above procedure: $\hat{H} = F \circ T$, with a function $F$, a test $T$, and a sequence of causal hypotheses $H_{G_1}, H_{G_2}, \ldots, H_{G_k}$ satisfying all the required properties. Argue for the two clauses as follows.

To establish clause 1, suppose for {\em reductio} that $F(\Sigma) = H_{G}$ but $G$ is not minimal to $\Sigma$, namely there is a graph $G'$ on $\VVV$ such that $\II(G) \subset \II(G') \subseteq \Sigma$. Since the sequence $H_{G_1}, H_{G_2}, \ldots, H_{G_k}$ contains all the Markov equivalence hypotheses about $\VVV$, we have that $H_G = H_{G_j}$ and $H_{G'} = H_{G_i}$ for some $j, i \le k$. So, to rewrite what we have already had: $F(\Sigma) = H_{G_j}$ and $\II(G_j) \subset \II(G_i) \subseteq \Sigma$. Since $\II(G_i) \supset \II(G_j)$, by the requirement in step 2 of the procedure we have that $i < j$. That is, $H_{G_i}$ is a hypothesis that occurs earlier than $H_{G_j}$ does in the sequence. But note that $\II(G_i) \subseteq \Sigma$. So, by the requirement in step 3, $F(\Sigma)$ is not $H_{G_j}$ but must be either $H_{G_i}$ or some earlier hypothesis in the sequence---contradiction. This establishes clause 1. 

To establish clause 2, suppose that $F(\Sigma) = H_{G}$ and that $\II(G) \subseteq \Sigma' \subseteq \Sigma$. It suffices to show that $F(\Sigma') = H_{G}$. Since the sequence $H_{G_1}, H_{G_2}, \ldots, H_{G_k}$ contains all the Markov equivalence hypotheses about $\VVV$, we have that $F(\Sigma) = H_{G} = H_{G_i}$ and $\II(G) = \II(G_i)$ for some index $i$ of the sequence. Since $F(\Sigma) = H_{G_i}$, by the requirement in step 3 we have:
	\op 
    \im[(i)] $\II(G_{i'}) \not\subseteq \Sigma$ for each $i' < i$.
    \ed 
Since $\II(G) = \II(G_i)$ and $\II(G) \subseteq \Sigma'$ (by hypothesis), we have:
	\op 
    \im[(ii)] $\II(G_i) \subseteq \Sigma'$.
    \ed
Since (i) holds and $\Sigma' \subseteq \Sigma$ (by hypothesis), we have:
    \op 
    \im[(iii)] $\II(G_{i'}) \not\subseteq \Sigma'$ for each $i' < i$,
    \ed 
So, by (ii) and (iii) and the requirement in step 3, we have that $F(\Sigma') = H_{G_i}$.  It follows that $F(\Sigma') = H_G$. This establishes clause 2.
\end{proof}

\begin{lemma} \label{lem-inflexibility}
For every u-minimal causal state $s=(G, P)$ and every learning method $\hat{H} = F \circ T$ that can be constructed from the above procedure, we have that $F(\II(P)) = H_G$.
\end{lemma}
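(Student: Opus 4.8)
The plan is to read off the conclusion from clause 1 of Lemma~\ref{lem-preservation} together with the definition of u-minimality; no new construction is needed. First I would check that $F(\II(P))$ is well defined and outputs some causal hypothesis. Since $s=(G,P)$ is a causal state, $G$ is Markov to $P$, i.e.\ $\II(G)\subseteq\II(P)$; so the linearly ordered sequence $H_{G_1},\dots,H_{G_k}$ of all causal hypotheses contains at least one $H_{G_i}$ with $\II(G_i)\subseteq\II(P)$ (namely $H_G$), and $F$ picks the first such one. Write $F(\II(P))=H_{G'}$ for the graph $G'$ determined by this choice.

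Next I would apply clause~1 of Lemma~\ref{lem-preservation} with the set of conditional independence statements taken to be $S=\II(P)$: since $F(\II(P))=H_{G'}$, the graph $G'$ is minimal to $\II(P)$, i.e.\ there is no graph $G''$ on $\VVV$ with $\II(G')\subset\II(G'')\subseteq\II(P)$. Moreover, by the way $F$ selects $H_{G'}$ we also have $\II(G')\subseteq\II(P)$, so $G'$ is Markov to $P$. Putting these two facts together is exactly the definition of ``$G'$ is minimal to $P$'' as given in Section~\ref{sec:setup}.

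Finally I would invoke the u-minimality of $s=(G,P)$: by definition, every graph minimal to $P$ is Markov equivalent to $G$. Hence $G'$ is Markov equivalent to $G$, so $[G']=[G]$ and therefore $H_{G'}=H_G$, which gives $F(\II(P))=H_G$, as required. The only thing needing care in this argument—the ``main obstacle'', such as it is—is the bookkeeping step that identifies ``minimal to the \emph{set} $S=\II(P)$'' (the notion appearing in Lemma~\ref{lem-preservation}) with ``minimal to the \emph{distribution} $P$'' (the notion appearing in the u-minimality hypothesis); this identification is legitimate precisely because the Markov relation $\II(G')\subseteq\II(P)$ comes for free from the construction of $F$.
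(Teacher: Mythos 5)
Your proposal is correct and takes essentially the same approach as the paper, which simply declares the lemma ``immediate from the requirements in steps 2 and 3''; your routing through clause~1 of Lemma~\ref{lem-preservation} (itself a consequence of those steps) plus the definition of u-minimality just spells out that same argument explicitly. The careful identification of ``minimal to the set $\II(P)$'' with ``minimal to the distribution $P$'' via $\II(G')\subseteq\II(P)$ is exactly the right bookkeeping and matches the paper's intent.
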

\begin{proof}
Immediate from the requirements in steps 2 and 3.
\end{proof}

The above lemma is the last one we need for proving clause 2 of theorem \ref{the:general1}. The next lemma will be used to prove clause 3 of theorem \ref{the:general2}.

\begin{lemma} \label{lem-flexibility}
For every minimal causal state $s=(G, P)$, there is a learning method $\hat{H} = F \circ T$ that can be constructed from the above procedure such that $F(\II(P)) = H_G$.
\end{lemma}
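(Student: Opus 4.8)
The plan is to make a judicious choice in Step~2 of the construction procedure: order the causal hypotheses so that $H_G$ precedes \emph{every} other hypothesis whose entailed-independence set is contained in $\II(P)$. Once such an ordering is fixed, Step~3 forces $F(\II(P)) = H_G$, since $F(\II(P))$ is by definition the first $H_{G_i}$ in the sequence with $\II(G_i) \subseteq \II(P)$, and $H_G$ is one such hypothesis (because $G$, being minimal to $P$, is in particular Markov to $P$, so $\II(G) \subseteq \II(P)$). A valid super-test $T$ exists by Proposition~\ref{pro-test-of-independence} independently of the ordering, so the only work is to produce the ordering.

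First I would isolate the one place where the hypothesis ``$(G,P)$ is minimal'' is used. Put $\mathcal{A} = \{\, H_{G'} : \II(G') \subseteq \II(P) \,\}$, the set of hypotheses ``admissible for the data $\II(P)$''; then $H_G \in \mathcal{A}$. Let $\mathcal{R}$ be the relation on hypotheses defined by $H_{G'}\,\mathcal{R}\,H_{G''} \iff \II(G') \supset \II(G'')$; this is well defined, since a hypothesis determines its $\II$-set and distinct hypotheses have distinct $\II$-sets, and it is a strict partial order. By definition, a legitimate Step~2 ordering is exactly a linear extension of $\mathcal{R}$. Minimality of $G$ to $P$ says precisely that there is no $G'$ with $\II(G) \subset \II(G') \subseteq \II(P)$; equivalently, no member of $\mathcal{A}$ lies strictly above $H_G$ in $\mathcal{R}$.

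Next I would enlarge $\mathcal{R}$ to a relation $\mathcal{R}'$ by adjoining the pairs $(H_G, y)$ for every $y \in \mathcal{A}\smallsetminus\{H_G\}$, and check that $\mathcal{R}'$ is still acyclic. Since $\mathcal{R}$ is acyclic, any cycle in $\mathcal{R}'$ must use one of the new pairs, and hence yields an $\mathcal{R}$-path from some $y \in \mathcal{A}\smallsetminus\{H_G\}$ back to $H_G$; by transitivity of $\supset$ this gives $\II(G) \subset \II(G_y) \subseteq \II(P)$, contradicting minimality. So $\mathcal{R}'$ is acyclic and admits a linear extension; index the hypotheses $H_{G_1},\ldots,H_{G_k}$ accordingly. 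This is a valid Step~2 ordering (it extends $\mathcal{R}$), and by construction $H_G$ precedes every other element of $\mathcal{A}$, so $H_G$ is the first element of $\mathcal{A}$ in the sequence. Hence $F(\II(P)) = H_G$. Pairing this $F$ with any strongly consistent, adherently locally uniform super-test $T$ of all the conditional independence statements, the method $\widehat{H} = F \circ T$ is constructible by the procedure (its well-definedness being covered by Lemma~\ref{lem-preservation}) and satisfies $F(\II(P)) = H_G$, as required.

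I do not expect a real obstacle here; once minimality is restated as ``no admissible hypothesis lies strictly above $H_G$'', everything is order-theoretic. The only point that needs a little care is the acyclicity check for $\mathcal{R}'$ — i.e. verifying that demanding ``$H_G$ first among $\mathcal{A}$'' does not clash with the monotonicity constraint of Step~2 — and that check is exactly where the minimality hypothesis is consumed.
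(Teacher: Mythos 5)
Your proposal is correct and takes essentially the same route as the paper: both exploit the freedom in Step~2 to place $H_G$ early in the ordering, with minimality of $G$ to $P$ guaranteeing this is compatible with the constraint that $\II(G_i) \supset \II(G_j)$ forces $i<j$. The only cosmetic difference is that you put $H_G$ ahead of \emph{all} hypotheses admissible for $\II(P)$ and read off $F(\II(P)) = H_G$ directly from Step~3 (with an explicit acyclicity check that fills in the paper's terse ``hence there is a linear order''), whereas the paper only places $H_G$ ahead of the other hypotheses minimal to $\II(P)$ and then invokes clause~1 of Lemma~\ref{lem-preservation}.
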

\begin{proof}
Let $s=(G, P)$ be any minimal causal state, and let $\mathcal{M}(P)$ denote the set of all Markov equivalence hypotheses whose graphs are minimal to $\II(P)$. Since $s$ is minimal, we have: first, $H_G\in \mathcal{M}(P)$; second, for every $H_{G'} \in \mathcal{M}(P)$ distinct from $H_G$, $\II(G') \not\supset \II(G)$. Hence, there is a linear order of all the Markov equivalence hypotheses about $\VVV$, $H_{G_1}, H_{G_2}, \ldots, H_{G_k}$, such that (i) $\II(G_i) \supset \II(G_j)$ implies $i < j$, (ii) $H_G$ = $H_{G_m}$ for some index $m$, and for every $H_{G'} \in \mathcal{M}(P)$ distinct from $H_G$, $H_{G'} = H_{G_n}$ for some index $n$ and $m<n$. Thanks to (i), this linear order can be used in step 2 of the above procedure, which, by clause 1 of lemma~\ref{lem-preservation}, yields a learning method $\hat{H} = F \circ T$ such that $F(\II(P)) \in \mathcal{M}(P)$. Then, because of (ii) and the requirement of step 3 of the procedure, it follows that $F(\II(P)) = H_{G_m}=H_G$.      
\end{proof}

\begin{proposition}\label{pro-key-existential-claim}
Let $\PPP$ be any causal learning problem such that $\VVV$ is a finite set of categorical variables, $\SSS$ is the state space consisting of all causal states on $\VVV$, and $\HH$ is the hypothesis set consisting of all the Markov equivalence hypotheses about $\VVV$. Then there is a learning method for $\PPP$ that can be constructed from the above procedure, and any such learning method has the following properties:
	\op 
	\im[$(a)$] convergence to the truth almost everywhere,
		
	\im[$(b)$] on a maximal domain,
		
	\im[$(c)$] with adherent local uniformity.
	\ed 
\end{proposition}

\begin{proof}
Let $\hat{H}$ be a learning method that can be constructed from the above procedure: $\hat{H} = F \circ T$. 

To prove property $(a)$, note that $\hat{H}$ converges to the truth in every u-minimal state in $\SSS$, thanks to construction step 1, the convergence/consistency property of $T$ established in proposition \ref{pro-test-of-independence}, and lemma \ref{lem-inflexibility}. So $\hat{H}$ fails to converge to the truth {\em only} in states in $\SSS$ that are not u-minimal, but those states form a nowhere dense subset of $\SSS$ (thanks to proposition \ref{pro-unfaithful-nowhere-dense}). So property $(a)$ follows.   

To prove property $(b)$, consider an arbitrary learning method $\hat{H}'$ that converges to the truth in all states where $\hat{H}$ does. It suffices to show that $\hat{H}'$ does not converge to the truth in more states than $\hat{H}$ does. Let $(G, P) \in \SSS$ be a state in which $\hat{H}'$ converges to the truth. It suffices to show that $\hat{H}$ converges to the truth in $(G, P)$. Recall that $\hat{H} = F \circ T$, and by construction step 3, that $F(\II(P)) = H_{G'}$ for some graph $G'$ Markov to $P$. So $(G', P)$ is a state in $\SSS$. Then, by proposition \ref{pro-test-of-independence}, $\hat{H}$ converges to the truth in state $(G', P)$---and, hence, $\hat{H}'$ does, too, by hypothesis. To sum up, $\hat{H}'$ converges to the truth in both states $(G, P)$ and $(G', P)$, which share the same sampling distribution. So it much that $H_G = H_{G'}$. It follows that, since $\hat{H}$ converges to the truth in state $(G', P)$, it also does in state $(G, P)$, as desired. 

To show that property $(c)$ applies to $\hat{H} = F \circ T$, suppose that $\hat{H}$ converges to the truth in a causal state $s = (G, P) \in \SSS$. So $H_G = F(\II(P))$. Then, by lemma~\ref{lem-preservation}, $G$ is minimal to $\II(P)$, so $G$ is minimal to $P$. By proposition \ref{pro-open}, we have:
	\op 
    \im[(i)] State $s$ has an open neighborhood $B_\epsilon(s)$ with a sufficiently small radius $\epsilon$ such that, for any state $s' = (G', P')$ in that open neighborhood $B_\epsilon(s)$, $G' = G$ and $\II(G) \subseteq \II(P') \subseteq \II(P)$.
    \ed 
Now, recall that, by construction step 1, super test $T$ consists of a test $T_\sigma$ for each conditional independence statement $\sigma$ in $\II(\VVV)$ with the convergence properties established in proposition \ref{pro-test-of-independence}. So: 
	\op 
    \im[(ii)] For each conditional independence statement $\sigma_i$ in $\II(G)$, which holds everywhere on open ball $B_{\epsilon}(s)$ by (i), the test $T_{\sigma_i}$ converges to the correct acceptance of $\sigma_i$ uniformly on $B_{\epsilon}(s)$, by clause 1 of proposition \ref{pro-test-of-independence}. 
    
    \im[(iii)] For each conditional independence statement $\sigma_j$ in $\II(\VVV) \smallsetminus \II(P)$, which is violated everywhere on open ball $B_{\epsilon}(s)$ by (i), there exists a radius $\epsilon_j \le \epsilon$ such that test $T_{\sigma_j}$ converges to the correct rejection of $\sigma_j$ uniformly on $B_{\epsilon_j}(s)$, by clause 2 of proposition \ref{pro-test-of-independence}. 
    \ed 
Now, let $\epsilon'$ be the minimum of the radius $\epsilon$ and the radii $\epsilon_j$ constructed in (iii). Then we have:
	\op 
    \im[(iv)] Causal state $s$ has an open neighborhood, namely $B_{\epsilon'}(s) \subseteq \SSS_G$ with $\epsilon' \le \epsilon$, on which the test $T$ of conditional independence converges uniformly to the correct acceptance of all the conditional independence statements in $\II(G)$ (by (ii)) and the correct rejection of all the conditional independence statements in $\II(\VVV) \smallsetminus \II(P)$ (by (iii)). That is,
	$$
	\inf_{s' \in B_{\epsilon'}(s)} \Prob_{s'} \Big( \II(G) \subseteq T(\XX_1, \ldots, \XX_n) \subseteq \II(P) \Big) \to 1  \quad\mathrm{as}\quad n \to \infty \,.
	$$
    \ed 
Since $F(\II(P)) = H_G$, we have: $\II(G) \subseteq T(\xx_1, \ldots, \xx_n) \subseteq \II(P)$ implies $F(T(\xx_1, \ldots, \xx_n)) = H_G$ (by the second clause of lemma~\ref{lem-preservation}). It follows that
	$$
	\inf_{s' \in B_{\epsilon'}(s)} \Prob_{s'} \Big( F(T(\XX_1, \ldots, \XX_n)) = H_G \Big) \to 1  \quad\mathrm{as}\quad n \to \infty \,.
	$$
For every causal state $s'$ in $B_{\epsilon'}(s)$, $s'$ shares the same causal graph $G$ with $s$, so $H_{s'} = H_G$. Therefore,
	$$
	\inf_{s' \in B_{\epsilon'}(s)} \Prob_{s'} \Big( F(T(\XX_1, \ldots, \XX_n)) = H_{s'} \Big) \to 1  \quad\mathrm{as}\quad n \to \infty \,.
	$$
But $\hat{H} = F\circ T$. So,
	$$
	\inf_{s' \in B_{\epsilon'}(s)} \Prob_{s'} \Big( \hat{H}(\XX_1, \ldots, \XX_n) = H_{s'} \Big) \to 1  \quad\mathrm{as}\quad n \to \infty \,,
	$$
which establishes property $(c)$, as desired.
\end{proof}

The existence result (clause 2) of theorem \ref{the:general1} follows immediately from the preceding proposition.

\section{Proof of Theorem \ref{the:general2}}\label{sec:proof-of-theorem}

This appendix is devoted to proving theorem \ref{the:general2}. Clause 1.2 follows immediately from lemmas \ref{lem:dense-open} and \ref{lem-crucial} (in section \ref{sec:proof-crucial-lemma}). So it remains to establish clause 1.1 and clause 2. 

To establish clause 1.1, let $\hat{H}$ be any learning method for causal learning problem $\PPP$ that achieves the joint mode $(a)$+$(b)$+$(c)$. Suppose for {\em reductio} that there is a u-minimal causal state $(G, P)$ in which $\hat{H}$ does not converge to the truth, i.e., does not converge to $H_G$. Consider the learning method $\hat{H}^*$ that rides on $\hat{H}$ as follows.
	\begin{quote}
	\textsc{Definition of} $\hat{H}^*$: Run a super test $T$ of all the conditional independence statements about $\VVV$, with the convergence properties established in proposition~\ref{pro-test-of-independence}. If $T$ accepts exactly the statements in $\II(P)$ (no more and no less), returns $H_G$; otherwise, apply $\hat{H}$.
	\end{quote}
We now show that $\hat{H}^*$ converges to the truth in every causal state in which $\hat{H}$ does. Let $(G', P')$ be any causal state in which $\hat{H}$ converges to the truth. By clause 1.2 (which has been established), $(G', P')$ is a minimal causal state. To show that $\hat{H}^*$ converges to the truth in state $(G', P')$, discuss two exhaustive cases: either $\II(P') = \II(P)$, or not. Case 1: suppose that $\II(P') = \II(P)$. Then $\II(G') = \II(G)$, for three reasons: first, $(G, P)$ is u-minimal; second, $(G', P')$ is minimal; and third, $\II(P') = \II(P)$. Since $\II(G') = \II(G)$, we have that $H_{G'} = H_{G}$. Note that $T$ has the convergence properties established in proposition~\ref{pro-test-of-independence}; so, in state $(G', P')$, $T$ converges to the acceptance of all and only the statements in $\II(P')$, which is identical to $\II(P)$. So $\hat{H}^*$ converges to hypothesis $H_G$ in state $(G', P')$. But $H_G = H_{G'}$. So $\hat{H}^*$ converges to the truth $H_{G'}$ in state $(G', P')$. Now turn to case 2: suppose that $\II(P') \not= \II(P)$. So, in state $(G', P')$, $T$ converges to exactly the statements in $\II(P')$, and hence it is not the case that $T$ converges to exactly the statements in $\II(P)$. So, in state $(G', P')$, $\hat{H}^*$ converges to whatever $\hat{H}$ converges to. With the above discussion of the two exhaustive cases, it follows that $\hat{H}^*$ converges to the truth in every causal state in which $\hat{H}$ does. Moreover, thanks to $T$, $\hat{H}^*$ converges to the truth in state $(G, P)$, in which $\hat{H}$ does not by hypothesis. Therefore, $\hat{H}$ does not achieve convergence to truth on a maximal domain---contradiction. This establishes clause 1.1.

To establish clause 2, consider any causal state $(G_1, P)$ that is minimal but not u-minimal. Since it is not u-minimal, there exists $G_2$ such that $H_{G_1}\neq H_{G_2}$ and $(G_2, P)$ is also a minimal causal state. By lemma~\ref{lem-flexibility} and proposition~\ref{pro-key-existential-claim}, there is a learning method $\hat{H}_1$ that achieves the joint mode $(a)$+$(b)$+$(c)$ and converges to the truth in $(G_1, P)$, and a learning method $\hat{H}_2$ that achieves the joint mode $(a)$+$(b)$+$(c)$ and converges to the truth in $(G_2, P)$. Since $H_{G_1}\neq H_{G_2}$, $\hat{H}_2$ does not converge to the truth in $(G_1, P)$. Clause 2 follows.

\section{An Illustrated Explanation of Why Clause 3 of Theorem \ref{the:general1} Holds}\label{sec:clause3}

Recall the example illustrated in figure \ref{fig-two-spaces}; for ease of reference, it is illustrated below in the simplified figure \ref{fig-violation1}.
	\begin{figure}[ht]
	\centering	\includegraphics[width=0.7\textwidth]{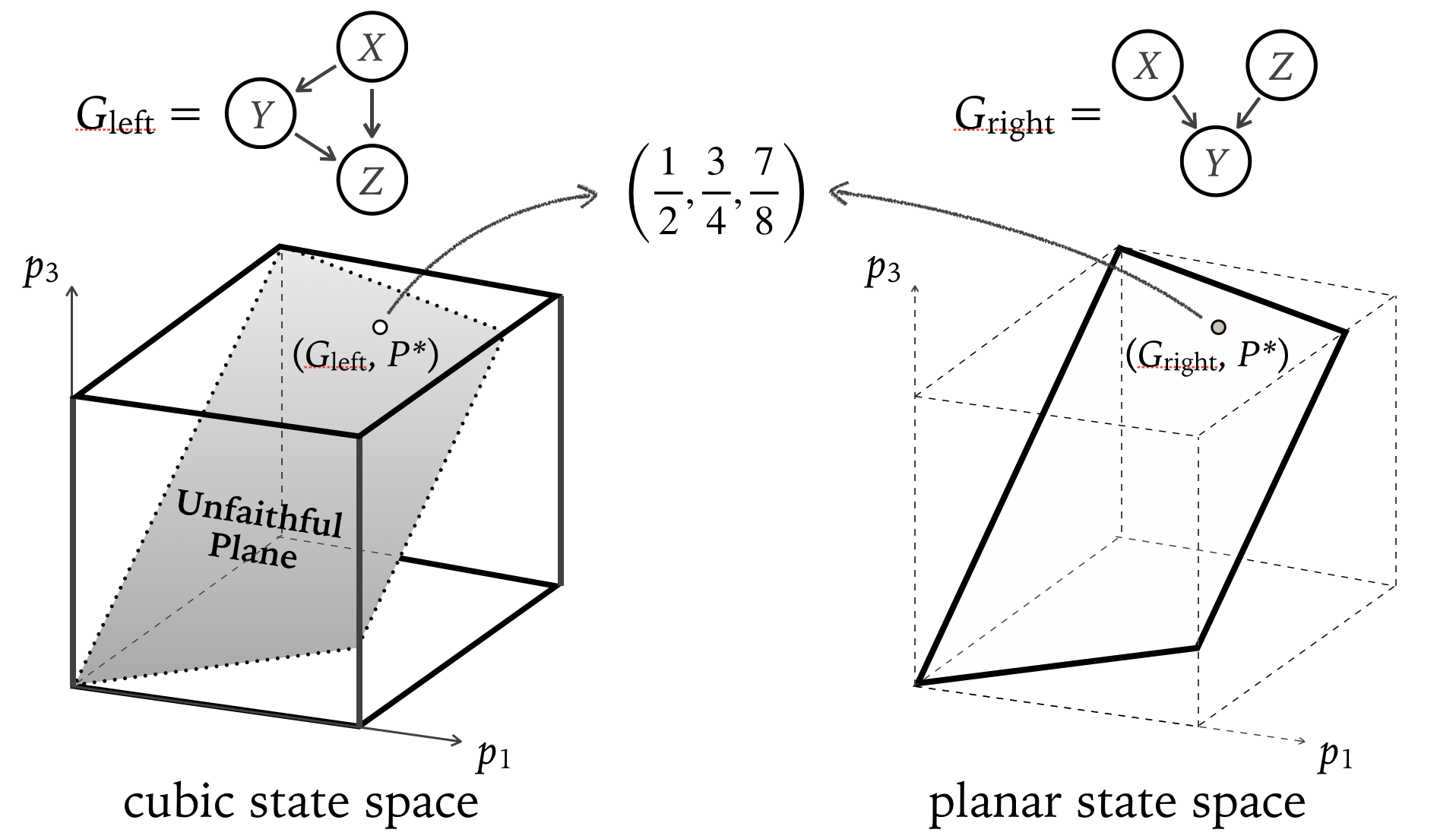}
	\caption{violation of adherently locally uniform convergence}
	\label{fig-violation1}
	\end{figure}
Note that the left, cubic state space embeds a trapezoidal plane, which is an identical copy of the planar state space on the right. The left trapezoid contains all and only the unfaithful causal states in the left cubic state space; so call it the {\bf unfaithful plane}, as indicated in figure \ref{fig-violation1}. Every causal state on the left, embedded trapezoid shares an identical joint distribution with a corresponding causal state on the right, planar state space. For any such pair of causal states, the convergence property has to be sacrificed in at least of the two. The standard design practice would sacrifice the convergence property on the left trapezoid. But consider the alternative proposal that makes sacrifices in accordance with the standard practice except that, for the distribution $P^*$ parametrized by $\left(p_1, p_2, p_3\right) = \left(\frac{1}{2}, \frac{3}{4}, \frac{7}{8}\right)$, sacrifices are made in the right, faithful causal state $(G_\rii, P^*)$ instead of the left, unfaithful causal state $(G_\lee, P^*)$. So, on this alternative proposal, the shaded areas in figure \ref{fig-violation1} are the places where sacrifices are made: a shaded point on the right, together with a shaded, punched plane on the left. On this alternative proposal, the convergence property is secured in the left causal state $(G_\lee, P^*)$ but sacrificed in some causal states that are {\em arbitrarily close} to that causal state, which leads to a violation of adherently locally uniform convergence. 

To avoid such a violation, one might try to secure the convergence property not just in the left causal state $(G_\lee, P^*)$ but in all of its nearby states, as depicted by the open disc on the left side of figure \ref{fig-violation2}. (The shaded areas are still understood as the places where sacrifices are made.)
	\begin{figure}[ht]
	\centering	\includegraphics[width=0.7\textwidth]{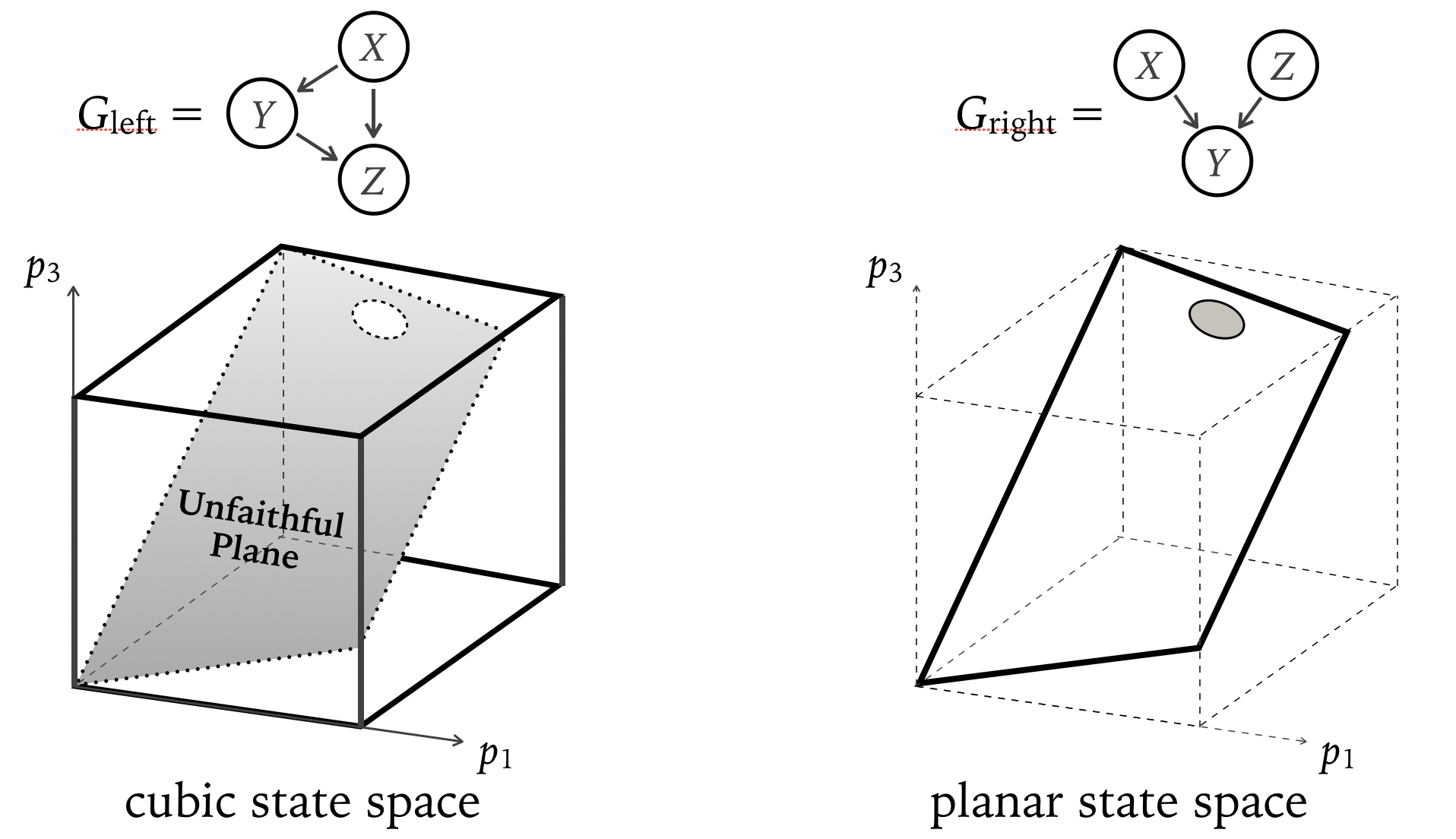}
	\caption{violation of almost everywhere convergence}
	\label{fig-violation2}
	\end{figure}
But doing so would force the convergence property to be sacrificed on the corresponding disc on the right, planar state space, which leads to a violation of almost everywhere convergence. 

So, there is only one way to avoid both the two kinds of violations depicted in figures \ref{fig-violation1} and  \ref{fig-violation2}: given any causal state $(G_\lee, P)$ on the left, unfaithful plane and the corresponding causal state $(G_\rii, P)$ in the right, planar state space, the convergence property has to be sacrificed in the left, unfaithful one. 

This allows for the possibility of converging to the truth in any faithful causal state. And this possibility can be forced into a reality by requiring a maximal domain of convergence.

\section{Some More Details of the Example in Section \ref{sec:setup1}}\label{sec:example}

Given the background assumption of the example in section \ref{sec:setup1}, there are two possible causal structures on the table with three parameters $p_1, p_2$, and $p_3$ whose values are unknown. For each of those two causal structures, the conditional probability of every effect given its immediate causes can be expressed by the three parameters, as indicated in figure \ref{fig-probabilities}.

	\begin{figure}[ht]
	\centering	\includegraphics[width=1.0\textwidth]{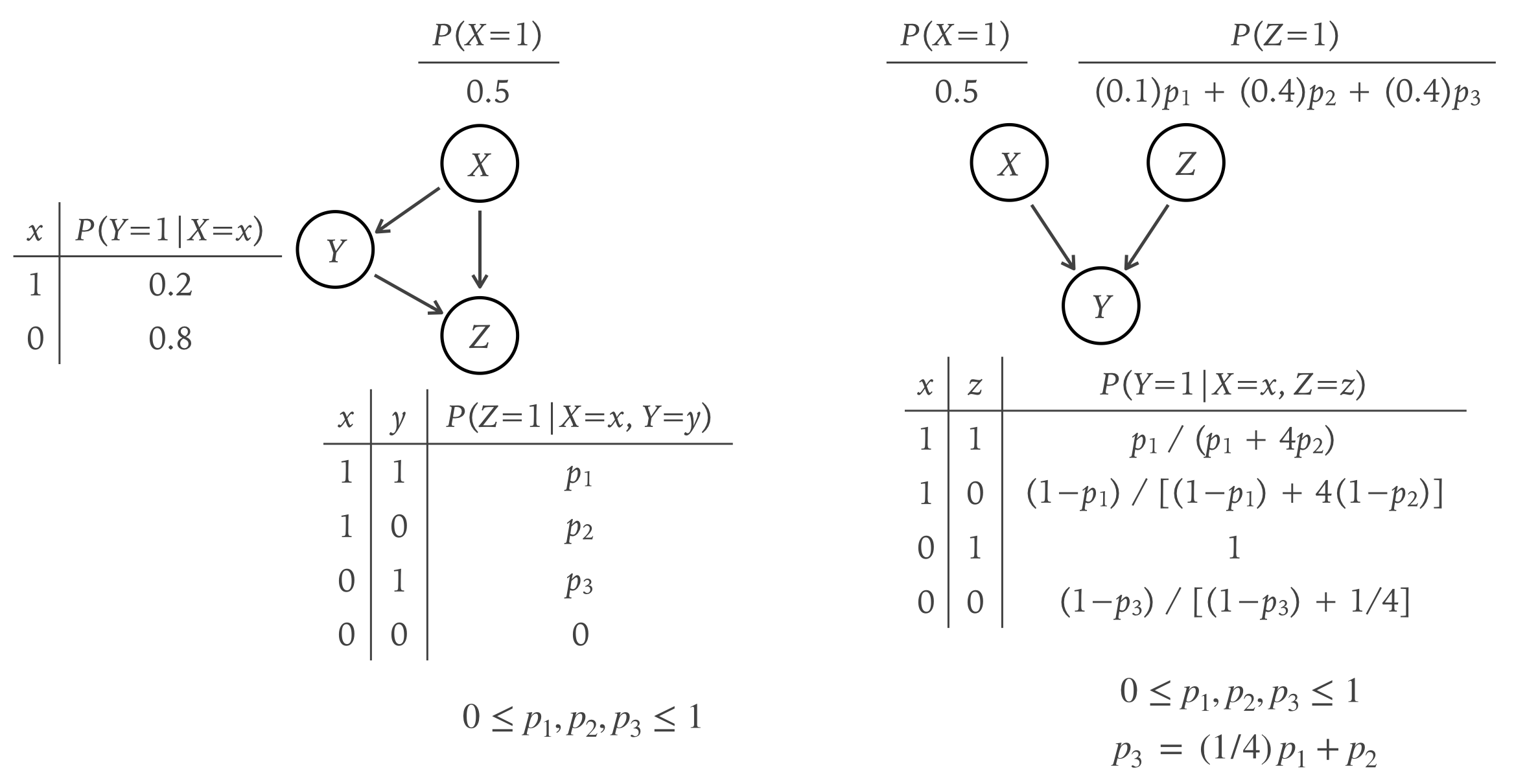}
	\caption{Conditional probability tables}
	\label{fig-probabilities}
	\end{figure}

When it is said that the same joint distribution is shared, what is actually meant is only that the same joint distribution is shared {\em in the absence of manipulation}. To illustrate, consider the two CBNs $\big(G_\lee, P^* \big)$ and $\big(G_\rii, P^* \big)$ that share the same joint distribution $P^*$ parametrized by $(p_1, p_2, p_3) = (\frac{1}{2}, \frac{3}{4}, \frac{7}{8})$. Also consider the manipulation that forces $Y = 0$. If the true CBN is the right one, the manipulation $Y = 0$ is only a manipulation of an effect rather than a cause (see the right causal graph $G_\rii$); so the distribution of $Z$ would remain the same were this manipulation applied---in particular, the probability of $Z=0$ would remain at $30\%$. But the same manipulation would raise the probability of $Z=0$ from $30\%$ to $62.5\%$ if instead the true CBN is the one on the left $\big(G_\lee, (\frac{1}{2}, \frac{3}{4}, \frac{7}{8}) \big)$. Indeed, in this case, the manipulation $Y = 0$ is a manipulation of a cause of $Z$. Therefore, whether the true CBN is the one on the left or on the right makes an important difference---at least for those who are thinking about manipulating $Y$ in order to change $Z$. So it would be great if there exists a learning method that can distinguish between those two CBNs. Unfortunately, there exists no such learning method if the available data are non-experimental (i.e., collected without any manipulation of the true, unknown CBN), as explained in section \ref{sec:setup2}.


\begin{thebibliography}{9}
\providecommand{\natexlab}[1]{#1}
\providecommand{\url}[1]{\texttt{#1}}
\expandafter\ifx\csname urlstyle\endcsname\relax
  \providecommand{\doi}[1]{doi: #1}\else
  \providecommand{\doi}{doi: \begingroup \urlstyle{rm}\Url}\fi

\bibitem[Chickering(2002)]{chickering2002optimal}
David~Maxwell Chickering.
\newblock Optimal structure identification with greedy search.
\newblock \emph{Journal of machine learning research}, 3\penalty0
  (Nov):\penalty0 507--554, 2002.

\bibitem[Feller(1957)]{feller1957introduction}
William Feller.
\newblock An introduction to probability theory and its applications.
\newblock 1957.

\bibitem[Meek(1995)]{meek1995strong}
Chris Meek.
\newblock Strong-completeness and faithfulness in bayesian networks.
\newblock In \emph{Proceedings of the Eleventh Conference on Uncertainty in
  Artificial Intelligence. in:, Montreal, QU, Morgan Kaufmann, San Mateo, CA},
  pages 411--418, 1995.

\bibitem[Pearl(2009)]{pearl2009causality}
Judea Pearl.
\newblock \emph{Causality}.
\newblock Cambridge university press, 2009.

\bibitem[Shimizu et~al.(2006)Shimizu, Hoyer, Hyv{\"a}rinen, and
  Kerminen]{shimizu2006linear}
Shohei Shimizu, Patrik~O Hoyer, Aapo Hyv{\"a}rinen, and Antti Kerminen.
\newblock A linear non-gaussian acyclic model for causal discovery.
\newblock \emph{Journal of Machine Learning Research}, 7\penalty0
  (Oct):\penalty0 2003--2030, 2006.

\bibitem[Spirtes et~al.(2000)Spirtes, Glymour, and
  Scheines]{spirtes2000causation}
Peter Spirtes, Clark~N Glymour, and Richard Scheines.
\newblock \emph{Causation, prediction, and search}.
\newblock MIT press, 2000.

\bibitem[Zhalama et~al.(2017)Zhalama, Zhang, and Mayer]{zhalama2017weakening}
Zhalama, Jiji Zhang, and Wolfgang Mayer.
\newblock Weakening faithfulness: some heuristic causal discovery algorithms.
\newblock \emph{International Journal of Data Science and Analytics},
  3\penalty0 (2):\penalty0 93--104, 2017.

\bibitem[Zhang(2013)]{zhang2013comparison}
Jiji Zhang.
\newblock A comparison of three occam's razors for markovian causal models.
\newblock \emph{The British Journal for the Philosophy of Science}, 64\penalty0
  (2):\penalty0 423--448, 2013.

\bibitem[Zhang and Hyv{\"a}rinen(2009)]{zhang2009identifiability}
Kun Zhang and Aapo Hyv{\"a}rinen.
\newblock On the identifiability of the post-nonlinear causal model.
\newblock In \emph{Proceedings of the twenty-fifth conference on uncertainty in
  artificial intelligence}, pages 647--655. AUAI Press, 2009.

\end{thebibliography}
\end{document}